\documentclass[11pt,letterpaper]{article}
\usepackage[margin=1in]{geometry}
\usepackage{natbib}

\usepackage[utf8]{inputenc} 
\usepackage[T1]{fontenc}    
\usepackage{amsmath,amssymb,amsthm,mathtools}
\usepackage{graphicx,booktabs,multirow,array,longtable, subcaption}
\usepackage{algorithm,algpseudocode}
\usepackage{microtype}
\usepackage[hidelinks]{hyperref}
\usepackage{xurl}

\hypersetup{
    colorlinks=true,
    linkcolor=blue,
    citecolor=blue,
    urlcolor=blue,
    pdfauthor={},
    pdfsubject={},
    pdfkeywords={}
}

\newcommand{\R}{\mathbb R}
\newcommand{\E}{\mathbb E}
\newcommand{\KL}{\operatorname{KL}}
\newcommand{\Poi}{\operatorname{Poisson}}
\newcommand{\Unif}{\operatorname{Uniform}}
\newcommand{\diag}{\operatorname{diag}}
\newcommand{\sg}{\operatorname{sg}}
\newcommand{\sigmoid}{\operatorname{sigmoid}}
\newcommand{\logit}{\operatorname{logit}}
\newcommand{\ind}{\mathbf 1}
\newtheorem{proposition}{Proposition}
\newtheorem{theorem}{Theorem}
\newtheorem{assumption}{Assumption}

\title{Scan-Parallel Variational Inference for event-driven\\ Latent Jump Models}

\author{Vansh Bansal\thanks{Corresponding author: Vansh Bansal (\texttt{vansh@utexas.edu})} , Min Chen, \& James G.~Scott \\
Department of Statistics and Data Science\\
UT Austin
}

\begin{document}
\date{}

\maketitle

\begin{abstract}
Continuous-time latent state-space models are widely used for irregularly
observed time series, typically representing uncertainty through a random initial state followed by smooth ordinary differential equation (ODE) dynamics, or  Brownian perturbations accumulated by a stochastic differential equation (SDE). While these constructions are natural for smooth or diffusive systems, they do not directly encode a finite set of unobserved shocks whose timing, magnitude, and effect on memory are themselves uncertain. We instead introduce latent jump models (LJMs): the latent state follows closed-form deterministic dynamics between Poisson-driven event times and undergoes a learned instantaneous jump at each event, with state-dependent intensity and mark distributions. This defines a time-homogeneous Markov process whose jumps are intrinsic state transitions rather than numerical integration steps. For training, we derive a tractable event-augmented path-space evidence lower bound (ELBO) and introduce \textit{Scan-Parallel Variational Inference (SPVI)}: an observation-conditioned variational posterior samples the complete latent event configuration before the state recurrence is evaluated using triangular affine scans. This enables fast and scalable training for LJMs in \textit{logarithmic temporal depth}. Across synthetic stochastic systems and real irregular time series, LJMs remain competitive with continuous-time latent baselines in forecast accuracy, often with fewer latent updates than numerical solver evaluations, while training one to two orders of magnitude faster than the sequential baselines.
\end{abstract}

\section{Introduction}
Continuous-time latent state-space models provide a natural language for irregularly observed time series: a latent state evolves between observation times, and an emission model relates that state to the measurements. Latent ODE models are especially appropriate when the dynamics are smooth and nearly deterministic conditional on the initial state \citep{chen2018,rubanova2019}. Latent SDE models add Brownian motion, allowing uncertainty to accumulate continuously and producing rough but continuous latent paths \citep{li2020}.
These choices give expressive models of smooth and diffusive evolution but many systems instead evolve through relatively regular motion interrupted by a finite number of abrupt, unobserved changes: interventions, failures, spikes, flares, price shocks, or transitions between operating conditions. In the latter regime, it may be scientifically and computationally preferable to model their times and marks as latent random variables rather than forcing them to appear as a steep continuous vector field or as the accumulation of many Gaussian perturbations.

\begin{figure}[t]
    \centering
    \includegraphics[width=\linewidth]{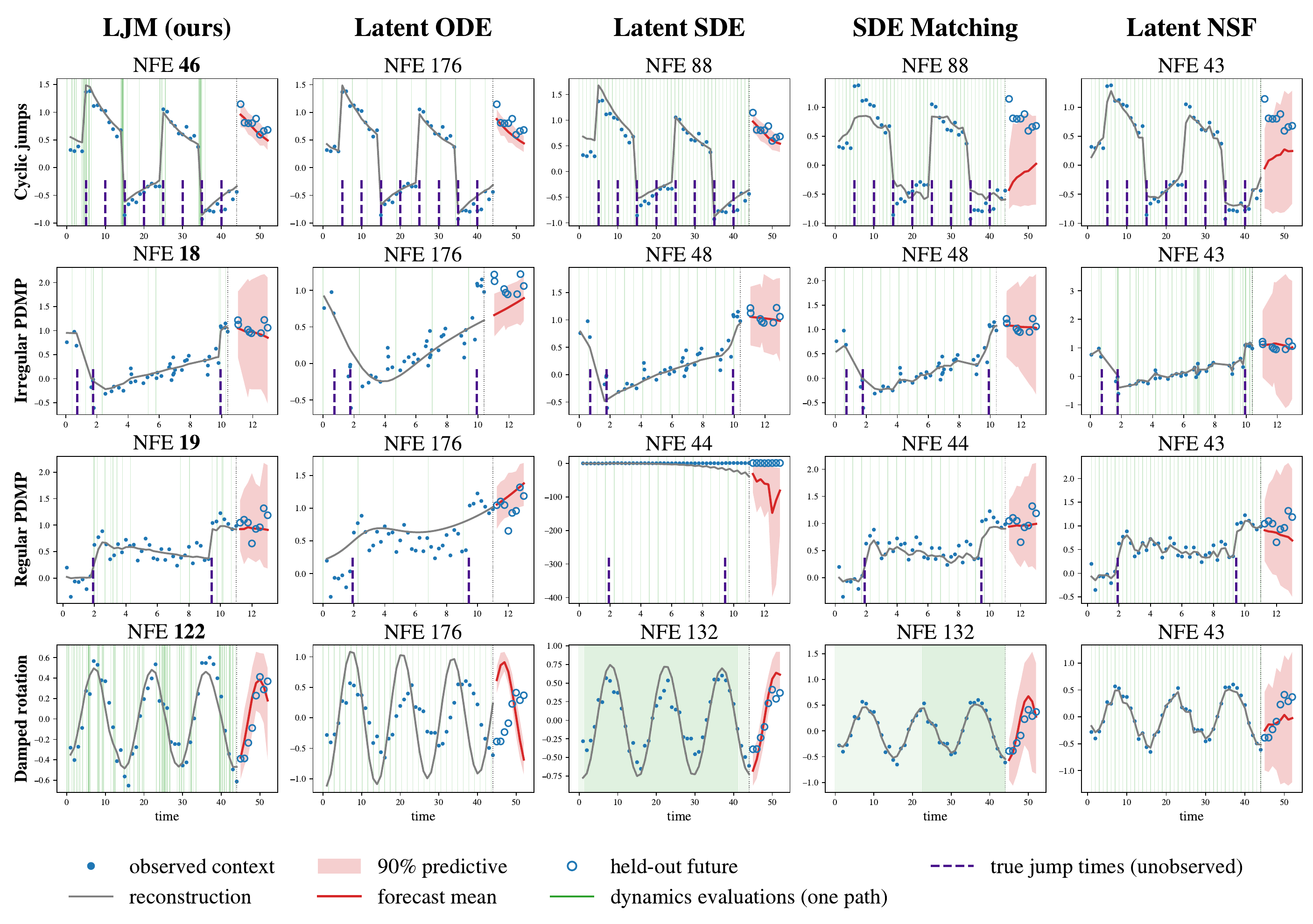}
    \caption{Comparison across four event-driven and smooth systems. All models use identical history-encoder and emission architectures. Solver step sizes are coarsened for baselines at evaluation to approach LJM event budget; NFE labels show the number of dynamics evaluations. LJM shows better reconstruction and forecast quality at fewer NFE. Latent SDE diverges on the regular PDMP, while Latent ODE and Neural Stochastic Flow (solver-free) exhibit errors in several examples.}
    \label{fig:illustration}
\end{figure}

We take this event-driven perspective and develop \emph{latent jump models} (LJMs), representing stochasticity through a sequence of latent events governed by a marked point process with Poisson noise. A continuous-time latent state evolves through closed-form decay between random event times and undergoes a learned marked-jump at each event. The observations remain (irregular) time-series measurements; however, the events themselves are latent. Therefore, events act as stochastic drivers of the latent state, whose intensity determines when the state is likely to jump and a random mark determines its character. Both the prior event intensity and mark distribution depend on the current state, so the consequences of one event may change the distribution of the later ones.  

This produces a latent piecewise-deterministic Markov process (PDMP) \citep{davis1984}, admitting a Chapman–Kolmogorov semigroup. Its trajectories can be simulated exactly, without numerical ODE or SDE solvers, by combining the closed-form flow with jumps induced by the latent event process. This differs from neural point process models in which events themselves are typically the observed data. Therefore, learned events in LJMs may differ from the (typically unidentifiable) events of the data-generating process, however still provide adaptive stochastic updates to the latent representation which are learned rather than enforced by numerical solvers; see Figure \ref{fig:illustration}. Moreover, as we show in Section \ref{sec:elbo}, replacing diffusion with jumps does not even require giving up variational learning on the path space. By constructing the variational posterior as a \emph{controlled marked Poisson process}, we derive a path-space evidence lower bound (ELBO) using a change-of-measure formula for marked-point processes \citep{bremaud1981}. The posterior shares its deterministic dynamics with prior process, but acts as a control policy on the latent state, adjusting the initial state law, intensity at which events occur and shaping the distribution of event marks in response to observed data. 

However, a tractable event likelihood is not enough. Event-driven simulation alone does not remove sequential state unrolling. The situation mirrors Latent ODEs and SDEs, whose training integrates the posterior path with a numerical solver: every step reads the state produced by the previous one, so the depth of a forward–backward pass grows with the number of solver steps, and stiff or long windows are paid for in sequential time \citep{chen2018, li2020}. A latent jump model inherits the same structure through its events. With a random number $L$ of events in a window, a general neural jump map $F_\theta(z,u)$ has to be applied $L$ times in order, because the state entering event $n$ is the state produced by event $n-1$. Sampling every event time and mark in advance does not remove this: the recurrence is in the state, not in the events; this serial depth makes training slow. Recent work removes simulation from continuous-time training in different ways including 
matching posterior marginals via SDE Matching \citep{bartosh2025} and by learning stochastic transition laws for solver-free sampling via Neural Stochastic Flows \citep{kiyohara2025}. We take a different route: we keep an explicit finite event configuration and a standard path-space objective, and change the transition so that the recurrence itself can be evaluated in parallel.

Hence, we introduce \textit{Scan-Parallel Variational Inference} (SPVI) that enables fast and scalable path-space training of event-driven state space models. Our central construction parallelizes this chain so that its depth scales \textit{logarithmically} in the number of events $L$ without approximating the dynamics. Two design decisions make it possible. First, the variational event process is exogenous: its initial state law, intensities and mark laws are functions of the observations through an encoder network, not of the sampled latent state. Therefore the complete event configuration, i.e., event times and marks, can be drawn before any state is computed. Second, the jump map is constructed to be block-triangular and affine within a block. The state is split into $J$ ordered blocks. Write $z^{(j)}_{n}$ for block $j$ just after event $n$ and $\tilde z^{(j)}_{n}$ for the same block just before it, i.e., the value after event $n-1$ carried forward by the closed-form decay. Event $n$ then updates block $j$ as
$$z^{(j)}_{n} = R^{(j)} \odot \tilde z^{(j)}_{n}+ B^{(j)},$$ where the retention gates $R^{(j)}\in(0,1)$ and the shifts $B^{(j)}$ are produced by a neural network from three inputs: mark of event $n$, lower blocks $\tilde z^{(<j)}_n$ just before, and lower blocks $z^{(<j)}_n$ just after the event $n$. The network never reads block $j$'s own value $\tilde z^{(j)}_{n}$, and it reads nothing from any later event.

Together, these decisions make every step of block $j$ an element-wise affine map whose coefficients are known before the state is. The marks are known because events are sampled first and the lower blocks because they are computed first. We emphasize that nonlinearity is relocated rather than removed: the coefficient networks are arbitrary functions of the marks and of the lower blocks, so the transition is nonlinear across blocks. However, the closure property of affine maps enables a parallel prefix scan \citep{kogge1973parallel, ladner1980parallel} which yields the state of block $j$ after every event in $O(\log(1+L))$ dependent stages. A recurrence of length $L$ therefore becomes $J$ parallel scans yielding $O(J\log(1+L))$ depth. SPVI therefore reproduces the chronological trajectory exactly while changing the order of computation from a length-$L$ event recurrence to \textit{$J$ parallel scans, each with logarithmic temporal depth}, enabling fast and scalable training for long time series.

We evaluate our method on six synthetic continuous-time systems spanning smooth rotations, cyclic jumps and piecewise-deterministic processes, and on two real benchmarks, REFIT \citep{murray2017electrical} electrical load measurement and USHCN \citep{menne2006ushcn} climate datasets. On event-driven systems, LJM matches the best forecast error with a fraction of the baselines' dynamics evaluations while remaining competitive even on smooth rotations. On both real datasets, LJM remains competitive with baselines in terms of forecasting error, at one to two orders of magnitude less training time.

\textbf{Summary of contributions.}
We introduce a finite-activity latent jump model with state-dependent
event laws, derive its event-augmented variational objective, and develop
a triangular transition architecture that enables scan-parallel posterior
computation. Experiments characterize the predictive role of latent
events and demonstrate shorter training times than the evaluated
baselines, with matched-model benchmarks isolating the benefit of
scan-based execution.

\section{Related work}
\paragraph{Continuous-time latent models.}
Neural ODE \citep{chen2018} learns vector fields, and Latent ODE \citep{rubanova2019} extends them to probabilistic modeling of irregular observations. Latent SDE \citep{li2020} introduces continuously accumulating process noise and support variational learning through changes of drift. NCDSSM \citep{ansari2023} combines neural observation models with continuous-discrete filtering and smoothing; continuous recurrent units use linear latent SDEs with closed-form recursive computations \citep{schirmer2022}.

\paragraph{Simulation-free and solver-free alternatives.}
Recent methods reduce numerical integration by changing the representation of continuous-time dynamics or their variational posterior. Neural Flows \citep{bilos2021} directly parameterize invertible ODE solution maps, with latent trajectories determined by their initial state; the autonomous semigroup identity is an optional regularization target rather than an architectural constraint. SDE Matching \citep{bartosh2025} instead retains an SDE prior and a path-space ELBO, parameterizes the posterior through one-time marginals, and constructs a compatible drift so that the objective can be estimated from states sampled at random times without integrating posterior trajectories. Amortized reparametrization uses a related marginal-based construction for latent SDE inference \citep{course2023}. However, \cite{smith2026helmholtz} show that, for diffusions with a common diffusion coefficient, one-time marginals do not uniquely determine the path law: compatible drifts can differ by a field $v_t$ satisfying $\nabla\!\cdot(q_t v_t)=0$. Selecting a particular compatible drift can therefore restrict posterior temporal dependence; Helmholtz-SDE~\citep{smith2026helmholtz} addresses this restriction by optimizing over marginal-preserving drift corrections. Neural Stochastic Flows  \citep{kiyohara2025} take a different approach, learning transition kernels for arbitrary time gaps with conditional normalizing flows and regularizing one-step and composed kernels toward Chapman--Kolmogorov consistency. This transition-kernel construction is distinct from the one-time-marginal parameterizations analyzed in Helmholtz-SDE. 

SPVI instead specifies a joint variational law over the initial state and \textit{complete latent event configuration}. The shared flow and jump map induce the state path, and the ELBO compares the augmented event laws. Temporal dependence thus follows from the event construction, while associative scans replace chronological posterior state unrolling. Its structural restriction is different: the variational event law depends on observations but not on the sampled latent state.

\paragraph{Jump processes and parallel state computation.}
PDMPs provide the classical framework for deterministic motion interrupted by random jumps \citep{davis1984}. Recurrent switching linear dynamical systems make regime transitions depend on the continuous state \citep{linderman2017}; LJM instead applies marked transformations to the state while retaining the same between-event flow. Variational inference for Markov jump processes has been developed through mean-field \citep{opper2007} and moment-based approximations \citep{wildner2019}. LJM uses a continuous state and an observation-conditioned marked-event posterior. Neural Jump SDEs learn hybrid dynamics for observed marked events \citep{jia2019}, whereas Neural Jump ODEs \citep{herrera2021} and GRU-ODE-Bayes \citep{debrouwer2019} update their predictive states when observations arrive. TriTPP \citep{shchur2020fast} enables parallel sampling and density evaluation of event times; its variational application to latent Markov jump processes combines inferred jump times with exact conditional inference over finite states. NeuralMJP \citep{neuralmjp2023} learns observation-conditioned transition rates for a discrete-state posterior and evolves its probabilities through a master equation.

LJM instead places a variational distribution over unobserved event times and marks. Its per-block recurrence is an event-indexed analogue of diagonal recurrences in S4D \citep{gu2022s4d} and LRU \citep{orvieto2023}, with event-dependent coefficients resembling Mamba's input-dependent transitions \citep{gu2024mamba}. Associative formulations also support Bayesian smoothers \citep{sarkka2021} and state-space layers \citep{smith2023}. S2P2 \citep{chang2025} models observed marked events using stacked state-space layers with additive event impulses. LJM, however, applies scans to a sampled latent event stream and trains through an event-augmented ELBO. Its triangular dependence resembles affine coupling in RealNVPs \citep{dinh2017}, but upper-block coefficients depend nonlinearly on realized lower-block states, requiring successive scans rather than one globally linear recurrence.

\section{Latent Jump Models}

In this section, we formally describe our generative model. Let the observations be $Y=\{(t_k,y_k)\}_{k=1}^{K}$, with $0=t_0<t_1<\cdots<t_K=T$. We introduce a continuous-time Markov latent state $Z_t\in\R^d$ to model these observations. LJMs therefore have three ingredients: a deterministic flow, a random event-jump mechanism, and an observation distribution.

Starting from $Z_0\sim p_0=\mathcal N(0,I)$, the state decays coordinatewise between events:
\begin{equation}
\dot Z_t=-\gamma\odot Z_t,\qquad
\Phi_\delta(z)=e^{-\gamma\delta}\odot z,\qquad \gamma\in(0,\infty)^d.
\label{eq:flow}
\end{equation}
Thus any event-free interval can be traversed in one closed-form operation. At time $t$, the pre-event state $z=Z_{t^-}$ determines the event intensity $\lambda_\theta(z)$ and the distribution of the mark $U\in\R^m$:
\begin{equation}
0<\lambda_{\min}\leq\lambda_\theta(z)\leq\bar\lambda_\theta<\infty,
\qquad p_\theta(u\mid z)=\mathcal N\!\left(u;\mu_\theta(z),\diag v_\theta(z)\right).
\label{eq:prior}
\end{equation}
The bounded rate gives finitely many events on every finite horizon. The state dependence makes the process responsive to its own history: an event can move it into a region with a different future rate or mark law. At each event time $\tau_n$, we sample $U_n\sim p_\theta(\cdot\mid Z_{\tau_n^-})$ and together call $(\tau_n, U_n)$ an \textit{event}. Each event updates state $Z_{\tau_n}=F_\theta(Z_{\tau_n^-},U_n)$ with the jump map $F_\theta$, where the pre-jump state $Z_{\tau_n^-}$ is obtained via the closed form deterministic flow $Z_{\tau_n^-} = \Phi_{\tau_n-\tau_{n-1}}(Z_{_{\tau_{n-1}}})$.

Although the generative model is valid for arbitrary jump maps, the state computation remains sequential when no structure is imposed on the network. In this work, we replace this sequential computation by parallel ``depth'' computation via our stacked block-triangular structure of the jump network $F_\theta$ which only reads the lower blocks of the sampled trajectory. This makes fast and parallel variational inference possible for LJMs with logarithmic temporal depth. We will discuss this in detail in Section \ref{sec:spvi}. 

Finally, the observation or emission model is $y_k\mid Z_{t_k}\sim p _\phi(y_k \mid Z_{t_k})$, evaluated on the observed coordinates. Our experiments assume a parametric form for the observation model, often a Gaussian with its mean parametrized by a neural network. Therefore, a predictive sample contains randomness from the inferred initial state, latent events, and observation model.

With measurable finite-valued updates and the bounded rate in Equation~\eqref{eq:prior}, these ingredients define a non-explosive, time-homogeneous Markov process. Appendix~\ref{app:theory} gives the construction of the generator and transition-semigroup argument.

\begin{figure}
    \centering
    \includegraphics[width=0.80\linewidth]{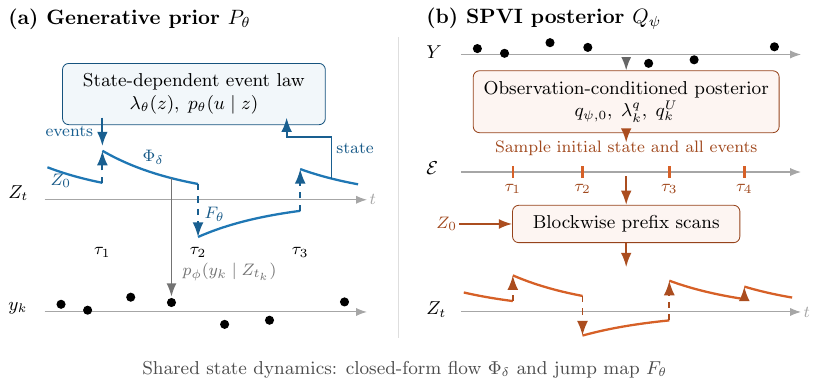}
    \caption{(a) \textbf{Generative prior}: the current latent state determines the intensity and mark distribution of subsequent events, requiring chronological simulation. (b) \textbf{SPVI posterior}: observations determine the initial-state distribution and event laws independently of the sampled latent state. The complete event configuration can therefore be sampled before constructing the trajectory through successive blockwise scans. Both processes share the closed-form flow \(\Phi_\delta\) and jump map \(F_\theta\).}
    \label{fig:prior-posterior}
    \vspace{-4mm}
\end{figure}
\label{sec:model}

\section{Scan-Parallel Variational Inference}
\label{sec:spvi}
In this section, we introduce \textit{Scan-Parallel Variational Inference} (SPVI), a framework that enables fast and scalable path-space training of event-driven state space models. Learning requires a distribution over the unobserved initial state and events. SPVI defines the variational posterior family as a \textit{controlled marked Poisson process} and proceeds in three steps: sample an observation-conditioned event configuration, construct its latent trajectory by scans, and compare that trajectory's event law with the generative prior via a tractable event-augmented path-space ELBO. We describe each of these ingredients one-by-one below.

\subsection{Variational family and event configuration generation}

In this section, we describe the variational family which makes the observation-conditioned posterior event sampling possible, enabling parallel state computation. It has three components: initial state head, event intensity head, and event mark head. All three heads are conditioned on a common recognition encoder, that uses causal attention over the observation sequence to produce observations embeddings \(h_{1:k} \in \mathbb R^{K \times d_h}\) in one single forward pass. Event parameters on \(I_k=(t_{k-1},t_k]\) are then computed from \(h_k\), which includes the right-endpoint observation \(y_k\), while the initial-state posterior uses the final context embedding \(h_K\). This defines an observation-conditioned variational posterior over the context trajectory, rather than a strictly online filtering procedure. At evaluation, the recognition network receives only the context observations; forecasting beyond the context uses the learned generative prior. 

For interval $I_k$ of length $\Delta_k$, recognition heads specify a rate $\lambda_{\psi,k}^q$ and Gaussian mark law $q_{\psi,k}^U=\mathcal N(\mu_{\psi, k}^q,\diag v_{\psi, k}^q)$. This yields $q_{\psi,0}(Z_0\mid Y)$ and the intervalwise event law
\begin{equation}
N_k\sim\Poi(\lambda_k^q\Delta_k),\qquad
\tau_{k,n}\mid N_k\sim\Unif(I_k),\qquad U_{k,n}\sim q_k^U.
\label{eq:posterior}
\end{equation}
Conditional on $Y$, the rate and mark parameters do not depend on the sampled latent state. All event times and marks can therefore be generated before state propagation. Denote their sorted collection by $\mathcal E=\{(\tau_n,U_n)\}_{n=1}^{N_T}$ and the law of $(Z_0,\mathcal E)$ by $Q_\psi$. This event driver therefore supplies parallelism without making the entire transition state independent. The shared jump map still reads the lower blocks of the sampled trajectory. We exploit that distinction next.

\subsection{Turn chronological updates into blockwise scans}

A fully general $F_\theta$ creates a sequential neural recurrence. We now introduce the stacked block-triangular structure of the jump network that retains nonlinear transitions while giving each block a simpler dependence on its own history. Split the state into $J$ ordered blocks, $z=(z^{(1)},\ldots,z^{(J)})$ and let $z^{(<j)}=(z^{(1)},\ldots,z^{(j-1)})$. For the $n^\text{th}$ event occurring at time $\tau_n$ with mark $U_{\tau_n}=u_n$, let the pre-jump state be $Z_{\tau_n^-}=\tilde z_n$ and the post-jump state be $Z_{\tau_n} = z_n=F_\theta(\tilde z_n, u_n)$. First, by the coordinate-wise decay dynamics between events, we have $$\tilde z_n^{(j)} = \Phi_{\delta_n}(z_{n-1}^{(j)}) = e^{-\gamma^{(j)}\delta_n}\odot z_{n-1}^{(j)}, \quad \text{where } \delta_n = \tau_n-\tau_{n-1}.$$ At jump time, we update blocks from bottom to top:
\begin{align}
(R_n^{(j)},B_n^{(j)})&=F_{\theta}^{(j)}(u_n,\tilde z_n^{(<j)},z_n^{(<j)}), \nonumber\\
z^{(j)}&=R_n^{(j)}\odot \tilde z^{(j)}+B_n^{(j)},\qquad
R_n^{(j)}\in(\rho_{\min},1)^{d_j}.
\nonumber
\end{align}
Here $\tilde z_n^{(<j)}$ and $z_n^{(<j)}$ are the lower blocks immediately before and after the event $(\tau_n, u_n)$. The retention $R^{(j)}$ determines how much memory survives: a retention near one preserves the previous state, while a small retention approximates a reset where the shift $B^{(j)}$ supplies new content. 

We emphasize that each $F^{(j)}_\theta$ can be any neural network, and the coefficients $R, B$ depend on lower blocks and the mark arbitrarily, but not on block $j$ itself. Therefore, the upper blocks respond to the realized lower-block trajectory giving a map that is nonlinear and expressive across blocks but affine in each block's own recurrent state. This structure enable parallel state computation while preserving expressivity. 

One can further combine the decay over $\delta_n=\tau_n-\tau_{n-1}$ with the event update and obtain the full recurrence as
\begin{equation}
z_n^{(j)}=A_n^{(j)}\odot z_{n-1}^{(j)}+B_n^{(j)},\qquad
A_n^{(j)}=R_n^{(j)}\odot e^{-\gamma^{(j)}\delta_n}.
\label{eq:recurrence}
\end{equation}
For the first block, every coefficient depends only on its mark and time gap. For the next block, every coefficient becomes available once the first block has been evaluated at all event times. The same argument applies successively to the remaining blocks. To evaluate one block, associate $(a,b)$ with the affine map $x\mapsto a\odot x+b$. Composing two such maps gives
\begin{equation}
(a_2,b_2)\star(a_1,b_1)
=(a_2\odot a_1,\ a_2\odot b_1+b_2).
\label{eq:scan}
\end{equation}
Composition is associative, so a prefix scan evaluates every state of the first block. Once that trajectory is available, the second block's coefficients can also be computed in one batch. Repeating this procedure evaluates all $J$ blocks.

\begin{proposition}
\label{prop:scan}
Conditional on the initial state and event stream, $J$ associative scans under Equation~\eqref{eq:scan} reproduce the chronological flow--jump trajectory. For fixed network architectures and block sizes, the scan stage has $O(JL)$ work and $O(J\log(1+L))$ temporal depth.
\end{proposition}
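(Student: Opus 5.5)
The plan is an induction on the block index $j$, combined with the standard correctness and complexity of parallel prefix scans for associative operators. Fix $Z_0=z_0$ and the sorted event stream $\mathcal E=\{(\tau_n,u_n)\}_{n=1}^{L}$, with $\tau_0=0$ and $\delta_n=\tau_n-\tau_{n-1}$. Let $z_n$ denote the chronological trajectory produced by alternating $\Phi_{\delta_n}$ with $F_\theta$.

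\textbf{Step 1: the recurrence form.} I would first verify that the chronological trajectory satisfies Equation~\eqref{eq:recurrence} blockwise. Coordinatewise decay gives $\tilde z_n^{(j)}=e^{-\gamma^{(j)}\delta_n}\odot z_{n-1}^{(j)}$. Substituting into the jump update gives $A_n^{(j)}=R_n^{(j)}\odot e^{-\gamma^{(j)}\delta_n}$.

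\textbf{Step 2: the inductive hypothesis $H_j$.} The statement $H_j$ is that, after $j$ scans, the computed values equal $z_n^{(i)}$ for all $i\le j$ and $0\le n\le L$. The pre-jump values $\tilde z_n^{(i)}$ then follow by the closed-form flow.

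\textbf{Step 3: the inductive step.} Given $H_{j-1}$, which is vacuous for $j=1$, the coefficients $(A_n^{(j)},B_n^{(j)})$ are deterministic functions of $u_n$, $\delta_n$, $\tilde z_n^{(<j)}$ and $z_n^{(<j)}$, all of which are already known. The key structural point is that $F^{(j)}_\theta$ never reads block $j$ or any later event, so these coefficients coincide with those used in the chronological computation.

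Unrolling the recurrence gives $z_n^{(j)}=M_n\circ\cdots\circ M_1(z_0^{(j)})$, where $M_k=(A_k^{(j)},B_k^{(j)})$ is the affine map $x\mapsto a\odot x+b$. By Equation~\eqref{eq:scan}, composition of affine maps corresponds to $\star$, and $\star$ is associative; I would check this directly on the second component, where both groupings give $a_3a_2b_1+a_3b_2+b_3$. Hence an inclusive prefix scan returns $P_n=M_n\star\cdots\star M_1$ for all $n$, independent of the bracketing the scan uses. Applying $P_n$ to $z_0^{(j)}$ yields $z_n^{(j)}$ exactly, which establishes $H_j$. Induction up to $j=J$ gives exact reproduction of the trajectory.

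\textbf{Step 4: complexity.} For block $j$, computing the coefficients is a batched application of a fixed network to $L$ inputs of bounded size. This costs $O(L)$ work and $O(1)$ depth, since the networks and block sizes are fixed. Each $\star$ operation costs $O(d_j)=O(1)$ elementwise work.

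A work-efficient scan such as Blelloch or Ladner--Fischer uses $O(L)$ operator applications and $O(\log(1+L))$ rounds. The term $1+L$ accounts for $L=0$ and for the prepended initial element. Applying the prefix maps to $z_0^{(j)}$ adds $O(L)$ work at $O(1)$ depth.

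The blocks must be processed sequentially, because block $j$'s coefficients depend on the completed lower blocks. Summing over the $J$ blocks therefore gives $O(JL)$ work and $O(J\log(1+L))$ depth.

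\textbf{Main obstacle.} I expect the main care point to be Step 3, namely showing that the coefficients computed in batch agree with the chronological ones. This depends on the dependency structure of $F^{(j)}_\theta$: it reads only $u_n$ and the lower blocks at the same event. It also depends on conditioning on $\mathcal E$, which makes the event stream fixed rather than state-dependent, as holds under $Q_\psi$. The complexity part is essentially a citation of the standard parallel-prefix bounds \citep{kogge1973parallel,ladner1980parallel}.
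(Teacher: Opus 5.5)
Your proposal is correct and follows essentially the same route as the paper's proof. Both argue by induction over blocks: the block-$j$ coefficients are computable in batch from the marks, the gaps, and the already-recovered lower-block pre- and post-event states. Associativity of $\star$ (checked on three maps) makes each prefix scan reproduce the chronological states exactly, and $J$ sequential work-efficient scans give $O(JL)$ work and $O(J\log(1+L))$ depth.
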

The proof uses affine composition within blocks and induction across blocks (Appendix~\ref{app:scanproof}). The construction uses only decays across nonnegative time gaps. Observation and quadrature states are then obtained from the preceding stored event state using Equation~\eqref{eq:flow}. Scans significantly  accelerate posterior state construction for identical models and inputs. We verify this empirically too: at batch-size $16$ and the training padding lengths, matched-model scans accelerate
posterior forward--backward computation by $125\times$ on cyclic jumps (Appendix~\ref{app:scan_benchmark}).

\subsection{A path-space ELBO via change of measure}
\label{sec:elbo}
The prior and posterior use the same flow and state transformation. Their difference lies in the initial distribution and in when and how events occur. Let $P_\theta$ denote the prior law of $(Z_0,\mathcal E)$. Under the domination and integrability conditions in Appendix~\ref{app:elbo}, the event-augmented ELBO is
\begin{equation}
\mathcal L(\theta,\phi,\psi)=
\E_{Q_\psi}\!\left[\sum_{k=1}^K\log p_\phi(y_k\mid Z_{t_k})\right]
-\KL(Q_\psi\Vert P_\theta)
\;\leq\;\log p_{\theta,\phi}(Y).
\label{eq:elbo}
\end{equation}
The marked-point-process likelihood ratio gives a local expression for the divergence:
\begin{align}
\KL(Q_\psi\Vert P_\theta)
&=\KL(q_{\psi,0}\Vert p_0)+\E_{Q_\psi}\int_0^T\kappa(t,Z_{t^-})\,dt,\label{eq:kl}\\
\kappa(t,z)
&=\lambda^q(t)\log\frac{\lambda^q(t)}{\lambda_\theta(z)}-\lambda^q(t)+\lambda_\theta(z)
+\lambda^q(t)\KL\!\left(q^U_t\Vert p_\theta(\cdot\mid z)\right).
\label{eq:kappa}
\end{align}
The first three terms compare event rates; the final term compares marks, weighted by their rate of occurrence. Both Gaussian KL terms are analytic. We estimate the remaining time integral using Monte Carlo random probes of the scanned trajectory. Thus the generative networks are evaluated at posterior states; full derivation is given in Theorem \ref{thm:elbo}.

\subsection{Training with hard events and forecasting with the prior}
To train posterior rates, we construct a homogeneous proposal process whose rate is determined by the maximum posterior intensity over that sequence's observation intervals. The envelope is recomputed from the current posterior rates at each training step. A candidate in interval \(k\) is accepted with probability \(\lambda_k^q/\bar\lambda_q\), where \(\bar\lambda_q=\sg\!\left(\max_k\lambda_k^q\right)+\rho,\) and $\rho$ is a small slack.
\begin{equation}
 g_i^{\rm h}=\ind\{\logit(p_k)+\zeta_i>0\},\quad
 g_i^{\rm s}=\sigmoid\!\left(\frac{\logit(p_k)+\zeta_i}{\tau_{\rm ST}}\right),\quad
 g_i=g_i^{\rm h}+g_i^{\rm s}-\sg(g_i^{\rm s}),
\label{eq:gate}
\end{equation}
where $\zeta_i$ is standard logistic noise. In the forward computation, an accepted proposal applies the original jump and a rejected proposal applies only the flow, using effective coefficients $1+ g_i(R_i-1)$ and $ g_iB_i$. Initial states and marks use Gaussian reparameterization. Algorithm~\ref{alg:train} in the appendix gives the implemented training procedure.

The experiments optimize a KL-weighted, event-regularized objective,
\begin{equation}
\mathcal J=
\E_{Q_\psi}\sum_k\log p_\phi(y_k\mid Z_{t_k})
-w_{\rm KL}\KL(Q_\psi\Vert P_\theta)
-\beta_{\rm sparse}\sum_k\lambda_k^q\Delta_k.
\label{eq:objective}
\end{equation}
The third term controls the activity of the learned latent representation. We evaluate the resulting tradeoff between predictive quality and the number of latent jump updates, without assuming a correspondence between learned events and physical events. $\mathcal J$ remains a lower bound on the ELBO in our setting since we use $w_{\rm KL}\geq1$ and $\beta_{\rm sparse}\geq0$. Training scans all candidates; posterior reconstruction at evaluation instead samples the accepted event process directly through Equation~\eqref{eq:posterior}.

Forecasting begins from a posterior sample of the last context state $Z_T$. Beyond $T$, the model uses the state-dependent prior. We generate proposals at rate $\bar\lambda_\theta$, propagate by the closed-form flow, and accept each proposal with probability $\lambda_\theta(Z_{t^-})/\bar\lambda_\theta$. Accepted marks are drawn from $p_\theta(\cdot\mid Z_{t^-})$. Repeating this procedure and sampling the emission model produces predictive paths, from which we calculate means and quantile intervals. Appendix~\ref{app:algorithms} gives the reconstruction and forecasting algorithms.
\section{Experiments}
\label{sec:experiments}

In this section, we show that LJM achieves better or competitive predictive quality on four synthetic systems and two real datasets while taking fewer dynamics updates than the numerical solver baselines. Latent ODE, Latent SDE, and NCDSSM require sequential solvers during training and inference both; SDE Matching matches marginals during training however inference requires sequential solvers; Neural Stochastic Flow (NSF) is  the solver-free baseline, but requires one update per observation interval. Appendix~\ref{app:metrics} gives definitions of the evaluated metrics and detailed training protocols.

\subsection{Synthetic Systems: smooth rotations, scheduled jumps and PDMPs}
\label{sec:synthetic}
\begin{table}[t]
\centering
\caption{Synthetic forecasting benchmarks. The reported numbers are mean and range over three independent runs of the whole study. NFE is dynamics evaluations per context trajectory; in the columns marked $\times$, a value $K \times$ means LJM is $K$ times \textit{cheaper} than the baseline.}
\label{tab:synthetic_budget}
\small
\setlength{\tabcolsep}{4pt}
\begin{tabular}{l ccc l ccc c}
\toprule
& \multicolumn{3}{c}{LJM (ours)} & \multicolumn{5}{c}{Best baseline at a similar budget} \\
\cmidrule(lr){2-4}\cmidrule(lr){5-9}
System & CRPS $\downarrow$ & RMSE $\downarrow$ & NFE $\downarrow$ & Method & CRPS $\downarrow$ & RMSE $\downarrow$ & NFE $\times$ & Train time $\times$ \\
\midrule
Cyclic jumps & $\underset{\text{\tiny$(0.0032)$}}{\mathbf{0.0679}}$ & $\underset{\text{\tiny$(0.0036)$}}{\mathbf{0.1261}}$ & 89 & Latent ODE & $\underset{\text{\tiny$(0.0010)$}}{0.0729}$ & $\underset{\text{\tiny$(0.0015)$}}{0.1276}$ & 2.0$\times$ & 7$\times$ \\
Irregular PDMP & $\underset{\text{\tiny$(0.0012)$}}{\mathbf{0.2244}}$ & $\underset{\text{\tiny$(0.0017)$}}{\mathbf{0.5210}}$ & 19 & Latent SDE & $\underset{\text{\tiny$(0.0036)$}}{0.2410}$ & $\underset{\text{\tiny$(0.0118)$}}{0.5467}$ & 2.6$\times$ & 10$\times$ \\
Regular PDMP & $\underset{\text{\tiny$(0.0009)$}}{\mathbf{0.2509}}$ & $\underset{\text{\tiny$(0.0022)$}}{0.5995}$ & 19 & SDE Matching & $\underset{\text{\tiny$(0.0009)$}}{0.2567}$ & $\underset{\text{\tiny$(0.0041)$}}{\mathbf{0.5945}}$ & 2.3$\times$ & 1$\times$ \\
Damped rotation & $\underset{\text{\tiny$(0.0056)$}}{0.1012}$ & $\underset{\text{\tiny$(0.0225)$}}{0.1693}$ & 205 & NCDSSM-NL & $\underset{\text{\tiny$(0.0017)$}}{\mathbf{0.0743}}$ & $\underset{\text{\tiny$(0.0027)$}}{\mathbf{0.1274}}$ & 1.7$\times$ & 86$\times$ \\
\bottomrule
\end{tabular}
\end{table}

The synthetic suite (see Figure \ref{fig:illustration}) spans damped rotations, cyclic schecduled jumps, and irregularly and regularly observed PDMPs. Each evaluation uses 1,000 trajectories, 44 context observations, eight forecast observations, and 128 predictive paths. Table~\ref{tab:synthetic_budget} reports LJM at the lowest-CRPS event penalty $\beta_\text{sparse}$ (selected on an independent validation set: $0.3$ for cyclic jumps, $1$ for both PDMPs, $0.1$ for damped rotation) against the strongest baseline at the closest budget it can reach. Budget match is approximate because solvers cannot take fewer than one step per observation interval but the baseline's budget is always chosen to be more than LJM's.

Across the synthetic benchmarks, LJM performs the best accross all event driven benchmarks while remaining competitive on rotation with $86\times$ faster training and $1.7\times$ faster inference.
Future-event suppression at inference leaves the sparsest cyclic fits nearly unchanged but degrades the PDMP and rotation fits (Appendix~\ref{app:future_events}).

\subsection{REFIT electrical load measurement data}
\label{sec:refit}
We evaluate long-context forecasting with $512-2,048$ observations for whole-house electrical demand for the 18 households of the REFIT Electrical Load Measurements release \citep{murray2017electrical}. The raw distribution of per-appliance readings is aggregated into
one-minute arithmetic means over $[t-60\,\mathrm{s}, t)$, and timestamped at the closing edge. The model receives only past aggregate power and elapsed timestamps. Power is scaled per household as $x = (P + 1\,\mathrm{W})/s_h$ with
$s_h$ the $0.99$ quantile of $P + 1\,\mathrm{W}$ over that household's unique
training minutes. Given a context of $512$, $1024$ or $2048$ minutes the task is to
forecast power at the every minute for the next $30$ minutes. We focus on scalable training baselines (SDE Matching and Latent NSF) for REFIT since sequential solver-based methods  are computationally prohibitive to train at long contexts under our experimental budget.

As shown in Table \ref{tab:refit_contexts}, LJM performs best on all three metrics at every context length showing that the advantage is not an artefact of one scoring rule: it holds for a
per-minute proper score, for a plain squared error, and for the aggregated
energy over the horizon, which is the quantity a demand-response application
actually consumes. Against
the stronger baseline, SDE Matching, it reduces normalised CRPS by 11--12\% and
horizon-total energy CRPS by 12\%; against the Neural Stochastic Flow the
margins are 13--17\% and 14--20\%. The model remains well-calibrated with sharp prediction intervals for all context lengths. The reason is that aggregate household load is
dominated by appliances switching on and off, which is a piecewise-continuous
signal with abrupt transitions. Therefore, a model whose latent path can only diffuse must
absorb those transitions into its diffusion term, widening its predictive
distribution everywhere to cover jumps that occur in a few places.
\begin{table}[t]
\centering\small
\setlength{\tabcolsep}{4pt}
\caption{REFIT aggregate load forecasting using 256 predictive trajectories per test window. Context is the number of observed points. CRPS$/s_h$ normalises by the household scale. Energy CRPS scores total consumption over the 30-minute forecast horizon.}
\label{tab:refit_contexts}
\begin{tabular}{@{}llrrrrr@{}}
\toprule
Context & Method & CRPS$/s_h\downarrow$ & RMSE (W)$\downarrow$ & $\text{E-CRPS}_\text{ \tiny(kWh)}$$\downarrow$ & 90\% cov. & Width (W)\\
\midrule
\multirow{3}{*}{512}
 & LJM (ours) & \textbf{0.0495}$_{\pm0.0004}$ & \textbf{551.7}$_{\pm2.9}$ & \textbf{0.0630}$_{\pm0.0005}$ & 0.925$_{\pm0.005}$ & 858$_{\pm33}$\\
 & SDE Matching & 0.0544$_{\pm0.0011}$ & 566.9$_{\pm1.2}$ & 0.0705$_{\pm0.0007}$ & 0.876$_{\pm0.017}$ & 804$_{\pm120}$\\
 & Latent NSF$^{\dagger}$ & 0.0588$_{\pm0.0018}$ & 601.5$_{\pm21.3}$ & 0.0779$_{\pm0.0031}$ & 0.877$_{\pm0.012}$ & 918$_{\pm322}$\\
\midrule
\multirow{3}{*}{1024}
 & LJM (ours) & \textbf{0.0492}$_{\pm0.0004}$ & \textbf{551.4}$_{\pm0.2}$ & \textbf{0.0628}$_{\pm0.0005}$ & 0.926$_{\pm0.008}$ & 926$_{\pm40}$\\
 & SDE Matching & 0.0568$_{\pm0.0006}$ & 573.7$_{\pm2.0}$ & 0.0730$_{\pm0.0010}$ & 0.883$_{\pm0.003}$ & 937$_{\pm161}$\\
 & Latent NSF$^{\dagger}$ & 0.0599$_{\pm0.0015}$ & 589.3$_{\pm9.2}$ & 0.0794$_{\pm0.0030}$ & 0.872$_{\pm0.013}$ & 740$_{\pm135}$\\
\midrule
\multirow{3}{*}{2048}
 & LJM (ours) & \textbf{0.0495}$_{\pm0.0005}$ & \textbf{551.5}$_{\pm1.3}$ & \textbf{0.0632}$_{\pm0.0008}$ & 0.925$_{\pm0.011}$ & 892$_{\pm29}$\\
 & SDE Matching & 0.0555$_{\pm0.0008}$ & 567.0$_{\pm1.5}$ & 0.0709$_{\pm0.0006}$ & 0.918$_{\pm0.008}$ & 1014$_{\pm97}$\\
 & Latent NSF$^{\dagger}$ & 0.0590$_{\pm0.0014}$ & 577.0$_{\pm2.6}$ & 0.0765$_{\pm0.0021}$ & 0.888$_{\pm0.015}$ & 882$_{\pm100}$\\
\bottomrule
\end{tabular}
\end{table}
. 

Table \ref{tab:refit_beta} further varies the sparsity penalty and shows that LJM performs better than both baselines at much fewer dynamics evaluations. Its coverage stays
between $0.909$ and $0.929$ throughout and is closest to nominal at
$\beta_{\mathrm{sparse}}=2$, where the interval width is also narrowest at
$735$~W, so the sparser models are not achieving their coverage by widening.

\subsection{USHCN climate data}
\label{sec:climate}
\begin{table}[t]
\centering
\small
\setlength{\tabcolsep}{3pt}
\caption{USHCN test forecasts: The numbers show mean $\pm$ sd over five station folds with 128 predictive paths.}
\label{tab:climate}
\begin{tabular}{@{}lcccrc@{}}
\toprule
Model & Forecast MSE $\downarrow$ & CRPS $\downarrow$ & 90\% coverage & NFE $\downarrow$ & Train time/fold $\downarrow$ \\
\midrule
LJM (ours) & $\underline{0.360}_{\pm0.069}$ & $\mathbf{0.258}_{\pm0.024}$ & $0.837_{\pm0.030}$ & 461 & \textbf{7.5 min} \\
NCDSSM-NL & $\mathbf{0.352}_{\pm0.055}$ & $0.305_{\pm0.013}$ & $0.968_{\pm0.006}$ & 6740 & 3.6 h \\
Latent SDE & $0.414_{\pm0.078}$ & $0.306_{\pm0.040}$ & $0.592_{\pm0.127}$ & 1685 & 21 h \\
Latent ODE & $0.451_{\pm0.107}$ & $0.371_{\pm0.042}$ & $0.284_{\pm0.033}$ & 6740 & 11.5 h \\
SDE Matching & $0.433_{\pm0.106}$ & $\underline{0.259}_{\pm0.030}$ & $0.958_{\pm0.018}$ & 1685 & 33 min \\
Latent NSF & $0.829_{\pm0.162}$ & $0.477_{\pm0.058}$ & $0.605_{\pm0.049}$ & $\mathbf{422}$ & \underline{19 min} \\
\bottomrule
\end{tabular}
\end{table}

We evaluate five station folds, each with 110--112 test stations. Context extends through $t=150$, and the next three observations are predicted; only recorded coordinates are scored. LJM and all baselines use 450 epochs except NCDSSM-NL that uses 200 optimizer steps at batch size 100.

LJM's mean MSE is $0.360$, compared with $0.352$ for NCDSSM-NL, with training times of $7.5$ minutes and $3.6$ hours per fold: a roughly $29\times$ difference; see Table \ref{tab:climate} and Appendix~\ref{app:real} for more details.


\section{Limitations and Conclusion}
\label{sec:limitations}
\textbf{Limitations.} Although latent events provide a probabilistic representation of the observed dynamics, but their times and marks need not be uniquely determined by sparse, noisy measurements. Several smaller jumps may explain the observations as well as a few larger jumps, while a flexible decoder introduces additional ambiguity in the latent representation. The expected-count penalty favors parsimonious event configurations but does not establish their identifiability. Our experiments evaluate predictive performance and event activity, rather than recovery of ground-truth events. Second, the variational posterior samples all events in parallel before state computation, hence posterior event times cannot react to past sampled states. Lastly, state-dependent forecasting remains chronological in our framework. Extending our framework for parallelized forecasting remains an open direction for future work.



\textbf{Conclusion.} We introduced latent jump models, a finite-activity latent PDMP with state-dependent event laws, learned memory retention and exact between-event propagation, together with an event-augmented path-space ELBO derived by a change of measure. SPVI replaces the length-$L$ chain of event updates with $J$ successive scans of $O(J \log(1+L))$ depth while keeping nonlinear interactions across blocks. On six synthetic systems and two real datasets, LJMs matched the forecast error of the strongest evaluated baselines on event-driven tasks with fewer latent jump updates than the baselines requiring numerical solvers, and trained one to two orders of magnitude faster than the sequential ones.

\label{mainend}
\clearpage
\subsection*{AI use statement}

In this work, we used generative AI tools for implementation and editing the manuscript. LLM-generated code was verified and tested for correctness. We take responsibility for the final content of this work, including text, claims or artifacts produced with the aid of generative AI.
\bibliographystyle{abbrvnat}
\bibliography{references_scan_parallel,iclr2027_conference}

\clearpage
\appendix

\section{Mathematical details}
\label{app:theory}
\subsection{Finite activity, exact event simulation, and the Markov property}
The generative state has no explicit dependence on absolute time. Its flow is $\Phi_\delta(z)=e^{-\gamma\delta}\odot z$, its event intensity is $\lambda_\theta(z)$, and its mark kernel is $p_\theta(du\mid z)$. We make the following assumptions explicit.

\begin{assumption}[Generative regularity]
\label{ass:generative}
The initial law is a probability measure on $\R^d$; the rate is Borel measurable with $0<\lambda_{\min}\leq\lambda_\theta(z)\leq\bar\lambda_\theta<\infty$; the mark distribution is a Borel probability kernel; and the jump map is measurable and finite-valued at every finite state and mark.
\end{assumption}

\begin{proposition}[Finite-activity generative process]
Under Assumption~\ref{ass:generative}, the flow--jump construction defines a non-explosive time-homogeneous Markov process. For a horizon $T$, its accepted event count satisfies
\begin{equation}
N_T\preceq_{\rm st}\Poi(\bar\lambda_\theta T).
\end{equation}
Its transition operators $P_t f(z)=\E[f(Z_t)\mid Z_0=z]$ satisfy $P_{s+t}=P_sP_t$.
\end{proposition}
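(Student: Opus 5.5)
The plan is to realize the process on a single probability space by \emph{thinning} a homogeneous Poisson process and then to read off all three claims from that representation. Fix an i.i.d.\ sequence of $\mathrm{Exp}(\bar\lambda_\theta)$ inter-arrival gaps, producing candidate times $0<s_1<s_2<\cdots$ with $s_i\to\infty$ a.s. Alongside them, take independent $\Unif(0,1)$ variables $V_i$ and independent uniforms $W_i$ used to generate marks. By Borel measurability of the kernel, there is a measurable $G$ with $G(z,W)\sim p_\theta(\cdot\mid z)$. Starting from $Z_0$, I would define the path recursively between consecutive candidates. On $[s_{i-1},s_i)$, set $Z_t=\Phi_{t-s_{i-1}}(Z_{s_{i-1}})$. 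At $s_i$, accept if $V_i\le\lambda_\theta(Z_{s_i^-})/\bar\lambda_\theta$, and in that case set $Z_{s_i}=F_\theta(Z_{s_i^-},G(Z_{s_i^-},W_i))$; otherwise keep $Z_{s_i}=Z_{s_i^-}$. Each step is a composition of measurable maps, so every $Z_t$ is a measurable function of $(Z_0,\text{candidates})$. Finite-valuedness of $F_\theta$ keeps the state in $\R^d$.

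I would then check that this construction has the intended dynamics. The standard thinning (Lewis--Shedler) argument shows that the accepted points have $\mathcal F_{t^-}$-intensity $\lambda_\theta(Z_{t^-})$. I would verify this through the compensator: $E[\text{accepted in }(t,t+h]\mid\mathcal F_t]=\int_t^{t+h}\bar\lambda_\theta\cdot\lambda_\theta(Z_{s^-})/\bar\lambda_\theta\,ds+o(h)$. Marks have conditional law $p_\theta(\cdot\mid Z_{\tau^-})$, so the constructed path is the flow--jump process.

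The counting bound and non-explosion come directly from this coupling. Accepted events are a subset of the candidates, so $N_T\le M_T$ pathwise, where $M_T\sim\Poi(\bar\lambda_\theta T)$. This gives the stochastic domination $N_T\preceq_{\rm st}\Poi(\bar\lambda_\theta T)$. Since $M_T<\infty$ a.s.\ for every $T$, only finitely many jumps occur on bounded intervals, so the process is non-explosive and defined for all $t\ge0$.

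For the Markov property and the semigroup law, I would use the memorylessness of the Poisson candidate process. Condition on $\mathcal F_s$. The candidates after $s$, shifted by $s$, form again a rate-$\bar\lambda_\theta$ Poisson process, and together with the fresh $(V_i,W_i)$ they are independent of $\mathcal F_s$. The recursion uses only the current state, not absolute time, because $\Phi$, $\lambda_\theta$, $p_\theta$ and $F_\theta$ are all time-independent. Hence $(Z_{s+t})_{t\ge0}=\Psi(Z_s,\text{fresh noise})$ with the same measurable $\Psi$ that maps $(Z_0,\text{noise})$ to $(Z_t)_{t\ge0}$. This yields $E[f(Z_{s+t})\mid\mathcal F_s]=P_tf(Z_s)$ for bounded measurable $f$. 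Measurability of $z\mapsto P_tf(z)$ follows by Fubini from measurability of $\Psi$. Taking expectations under $Z_0=z$ and using the tower property gives $P_{s+t}f(z)=E_z[P_tf(Z_s)]=P_sP_tf(z)$.

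The main obstacle is making the strong-Markov-type restart at the fixed time $s$ rigorous: the candidate point straddling $s$ has to be handled correctly. I would resolve this with the memoryless property of the exponential residual gap, or equivalently the independent-increments property of the Poisson random measure on $(s,\infty)$. I would phrase the construction in terms of a Poisson random measure on $\R_+\times[0,1]\times[0,1]$, so that the restart is immediate from independence of its restrictions to disjoint time sets.
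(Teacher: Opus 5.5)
Your proposal is correct and follows the same route as the paper's proof. The shared steps are a rate-$\bar\lambda_\theta$ Poisson thinning construction, the pathwise inclusion of accepted events among the candidates (giving both the count bound and non-explosion), and fresh, time-independent randomness after $s$ (giving the time-homogeneous Markov property and $P_{s+t}=P_sP_t$ via the tower property); your additional details, namely the measurable randomization map for marks, the Poisson random measure handling of the restart at a fixed time, and the Fubini argument for measurability of $P_tf$, make rigorous steps that the paper states in a single line each.
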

\begin{proof}
Generate a homogeneous Poisson proposal process of rate $\bar\lambda_\theta$. At each proposal time, propagate the state with $\Phi$, accept with probability $\lambda_\theta(z)/\bar\lambda_\theta$, and, upon acceptance, draw a mark from $p_\theta(\cdot\mid z)$ and apply $F_\theta$. The proposal count is finite almost surely on a finite horizon, and every update produces a finite state. Accepted events are a subset of proposals, giving the count bound. The accepted conditional intensity and mark law equal those specified by the model. Future evolution depends only on the current state and fresh proposal, acceptance, and mark randomness. Since none of these transition rules depends on absolute time, the process is time homogeneous and Markov. Applying conditional expectation at time $s$ gives $P_{s+t}f=P_s(P_tf)$.
\end{proof}
For smooth test functions with locally integrable jump terms, the compensated jump formula gives the extended generator
\begin{equation}
\mathcal A_\theta f(z)=\langle-\gamma\odot z,\nabla f(z)\rangle
+\lambda_\theta(z)\int\!\left[f(F_\theta(z,u))-f(z)\right]p_\theta(du\mid z).
\label{eq:generator}
\end{equation}
The first term describes event-free motion; the second describes random state transformations. Event-free intervals use the exact flow, and thinning samples the specified event law without a time grid. This is exact simulation of the parameterized process, with ordinary floating-point arithmetic used to evaluate its maps.

\subsection{Why the triangular recurrence admits a scan}
\label{app:scanproof}
For three affine maps, both parenthesizations of Equation~\eqref{eq:scan} give
\begin{equation}
\left(a_3\odot a_2\odot a_1,\;
 a_3\odot a_2\odot b_1+a_3\odot b_2+b_3\right).
\end{equation}
Thus $\star$ is associative and has identity $(\mathbf 1,\mathbf 0)$. If $(\alpha_i,\beta_i)$ is the prefix composition through event $i$, then $z_i=\alpha_i\odot z_0+\beta_i$. An implementation that assumes zero initial state can instead replace $b_1$ with $b_1+A_1\odot z_0$.

\begin{proof}[Proof of Proposition~\ref{prop:scan}]
In block 1, the marks, gates, and inter-event gaps determine every coefficient before the recurrence is evaluated. Prefix composition therefore gives exactly the chronological block-1 states. Suppose blocks $1,\ldots,j-1$ have been recovered. Their pre-event states are obtained by applying the next decay to the preceding post-event states; their post-event states are already available. Together with the marks, these are precisely the inputs to $F_\theta^{(j)}$. All block-$j$ coefficients can therefore be evaluated in a batch. A second affine prefix calculation gives exactly its chronological states. Induction proves the result for all blocks. Each work-efficient scan has linear work and logarithmic span; there are $J$ sequential block scans.
\end{proof}
This argument also covers rejected candidates: when $g_i=0$, the event map is the identity and the coefficient contains only the decay. Identity padding uses $(A_i,b_i)=(1,0)$ and does not alter any prefix. A trajectory with no events is simply $Z_t=\Phi_t(Z_0)$.

To answer a query at time $t$, find $i=\max\{n:r_n\leq t\}$, with $r_0=0$ and stored $Z_0$, and return $\Phi_{t-r_i}(Z_{r_i})$. The implementation uses post-event states for observation queries and pre-event states for KL probes on exact ties. Under continuous event times, a tie with a fixed observation time occurs with probability zero.

\subsection{The event-augmented likelihood ratio and ELBO}
\label{app:elbo}
Write $\lambda^q(t)=\lambda_k^q$ and $q_t^U=q_k^U$ on $I_k$. Conditional on the observations, the variational event process is a marked inhomogeneous Poisson process. Its initial state is sampled independently of its event configuration under the specified variational family.

\begin{assumption}[Domination and finite entropy]
\label{ass:entropy}
On $[0,T]$, $q_{\psi,0}\ll p_0$; the prior and posterior event processes are non-explosive with positive finite rates; and $q_t^U\ll p_\theta(\cdot\mid Z_{t^-})$ for almost every time and $Q_\psi$-almost every reachable state. The expected absolute initial, event-rate, and mark log-ratios, both compensator integrals, and the absolute observation log likelihood are finite.
\end{assumption}
The positive Gaussian densities and bounded rates used in the implementation provide the domination conditions. The integrability requirement concerns their values along the sampled state process.

\begin{theorem}[Event-augmented change of measure] \label{thm:elbo}
Under Assumptions~\ref{ass:generative} and~\ref{ass:entropy}, $Q_\psi\ll P_\theta$ on the space of $(Z_0,\mathcal E)$, and
\begin{align}
\log\frac{dQ_\psi}{dP_\theta}
={}&\log\frac{q_{\psi,0}(Z_0\mid Y)}{p_0(Z_0)}
+\sum_{\tau_n\leq T}\left[
\log\frac{\lambda^q(\tau_n)}{\lambda_\theta(Z_{\tau_n^-})}
+\log\frac{q^U_{\tau_n}(U_n)}{p_\theta(U_n\mid Z_{\tau_n^-})}\right]\notag\\
&-\int_0^T\left[\lambda^q(t)-\lambda_\theta(Z_{t^-})\right]dt.
\label{eq:rn}
\end{align}
Taking expectations yields Equations~\eqref{eq:kl}--\eqref{eq:kappa} and the ELBO in Equation~\eqref{eq:elbo}.
\end{theorem}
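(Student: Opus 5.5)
The plan is to build the density of $Q_\psi$ with respect to $P_\theta$ in two factors: one for the initial state and one for the event configuration given the initial state. The event factor comes from the classical likelihood ratio for marked point processes \citep{bremaud1981}, using a common reference measure.

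\textbf{Step 1: the reference measure.} Take $R$ to be the law of $(Z_0,\mathcal E)$ with $Z_0\sim p_0$ and $\mathcal E$ a unit-rate homogeneous Poisson process on $[0,T]$. Its marks are i.i.d.\ from a fixed reference law, say Lebesgue-dominated with an explicit density, independent of $Z_0$.

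\textbf{Step 2: the two densities.} Once $Z_0$ and the events up to $t^-$ are given, the state $Z_{t^-}$ is a deterministic, measurable function of this history. This follows because $\Phi$ and $F_\theta$ are measurable and finite-valued (Assumption~\ref{ass:generative}). So the prior event process has predictable intensity $\lambda_\theta(Z_{t^-})$ and predictable mark kernel $p_\theta(\cdot\mid Z_{t^-})$. The posterior, given $Y$, has deterministic intensity $\lambda^q(t)$ and mark law $q^U_t$, and it draws $Z_0$ independently from $q_{\psi,0}$.

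The standard Jacod/Brémaud formula then gives the density of each law with respect to $R$, restricted to $\mathcal F_T$. For a predictable intensity $\lambda$ and predictable mark density $f$, it reads
\[
\prod_{\tau_n\le T}\lambda(\tau_n)f_{\tau_n}(U_n)\exp\Bigl(\int_0^T(1-\lambda(t))\,dt\Bigr),
\]
with $f$ taken relative to the reference mark law. The initial-state factors are $q_{\psi,0}/p_0$ and $1$ respectively.

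For the intensity this is legitimate because the rates are bounded and positive. Non-explosion means the product is almost surely finite, and bounded intensities make the exponential martingale a true martingale, for instance by Novikov-type or Lépingle–Mémin criteria for point processes. This is where Assumption~\ref{ass:generative} is used.

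\textbf{Step 3: the Radon–Nikodym derivative.} The prior density is strictly positive wherever the posterior density is, since $\lambda_\theta\ge\lambda_{\min}>0$ and $q^U_t\ll p_\theta(\cdot\mid Z_{t^-})$. Together with $q_{\psi,0}\ll p_0$, this gives $Q_\psi\ll P_\theta$. Dividing the two densities cancels the reference terms and produces exactly \eqref{eq:rn}.

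\textbf{Step 4: the KL divergence.} Take $\E_{Q_\psi}$ of \eqref{eq:rn}. The initial term gives $\KL(q_{\psi,0}\Vert p_0)$. For the event sum, apply the compensator identity under $Q_\psi$: for a nonnegative or integrable predictable function $H(t,u)$,
\[
\E_{Q}\sum_n H(\tau_n,U_n)=\E_Q\int_0^T\lambda^q(t)\int H(t,u)\,q^U_t(du)\,dt.
\]
The integrand is predictable because it depends on $Z_{t^-}$. Inserting the rate and mark log-ratios gives $\lambda^q\log(\lambda^q/\lambda_\theta)+\lambda^q\KL(q^U_t\Vert p_\theta(\cdot\mid Z_{t^-}))$. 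Adding the compensator term $-\lambda^q+\lambda_\theta$ yields $\kappa$ in \eqref{eq:kappa}. Fubini is justified by the integrability in Assumption~\ref{ass:entropy}.

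\textbf{Step 5: the ELBO.} The observation model depends only on the path, which is a function of $(Z_0,\mathcal E)$. Hence $p(Y)=\E_{P_\theta}[\prod_k p_\phi(y_k\mid Z_{t_k})]$. Jensen's inequality after changing measure to $Q_\psi$, with the observation log-likelihood integrable, gives \eqref{eq:elbo}.

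\textbf{Main obstacle.} I expect the main difficulty in Step 2: confirming that the prior density formula holds with a state-dependent, path-defined intensity and mark kernel. This requires predictability of $Z_{t^-}$ and the martingale property of the likelihood-ratio process, rather than mere local martingale. I would handle it through the thinning construction in the proof of the finite-activity proposition. Boundedness $\lambda_\theta\le\bar\lambda_\theta$ lets one compute the density explicitly on each event-count set $\{N_T=n\}$ via the Janossy densities, which avoids martingale arguments altogether.
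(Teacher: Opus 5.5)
Your proposal is correct and is essentially the paper's argument. The paper fixes $N_T=n$ and writes both laws as densities in ordered time--mark coordinates; this is your Janossy fallback, and it differs from your unit-rate Poisson reference $R$ only by a common factor $e^{-T}\prod_n \nu(U_n)$, where $\nu$ is the reference mark density, which cancels in the ratio. It then divides the densities using domination, gets the KL from the compensation identity under $Q_\psi$, and gets the ELBO from Jensen, exactly as in your Steps 3--5. One point to tighten: $\nu$ should have an everywhere-positive Lebesgue density (e.g.\ standard Gaussian), not merely be Lebesgue-dominated, so that the Gaussian mark laws are actually absolutely continuous with respect to it.
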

\begin{proof}
Fix the event count and use ordered event times and marks as coordinates. The generative density is the initial density times
\begin{equation}
\exp\!\left\{-\int_0^T\lambda_\theta(Z_{t^-})dt\right\}
\prod_{n=1}^{N_T}\lambda_\theta(Z_{\tau_n^-})p_\theta(U_n\mid Z_{\tau_n^-}).
\end{equation}
The posterior density has the same form with $q_{\psi,0}$, $\lambda^q$, and $q_t^U$. The shared deterministic maps recover the state from these coordinates. Domination allows division of the two densities; taking logarithms gives Equation~\eqref{eq:rn}. Under $Q_\psi$, the compensation identity gives
\begin{equation}
\E_{Q_\psi}\sum_{\tau_n\leq T}a(\tau_n,Z_{\tau_n^-},U_n)
=\E_{Q_\psi}\int_0^T\lambda^q(t)\int a(t,Z_{t^-},u)q_t^U(du)dt.
\end{equation}
Applying it to the event log-ratios yields the rate and mark terms in Equation~\eqref{eq:kappa}. Finally, applying Jensen's inequality to the event-augmented marginal likelihood gives Equation~\eqref{eq:elbo}.
\end{proof}
The comparison is performed before mapping events into state paths, so no jump-map Jacobian enters the objective. If several event configurations produce the same state path, the augmented KL can exceed the KL between the induced state-path laws; the augmented objective is still a valid lower bound.

For diagonal Gaussians, the mark KL at a state $z$ is
\begin{equation}
\KL(q_k^U\Vert p_\theta(\cdot\mid z))
=\frac12\sum_{a=1}^m\left[
\log\frac{v_{\theta,a}(z)}{v^q_{k,a}}
+\frac{v^q_{k,a}+(\mu^q_{k,a}-\mu_{\theta,a}(z))^2}{v_{\theta,a}(z)}-1\right].
\end{equation}
For $M$ quadrature probes per interval, draw independent $V_{k,s}\sim\Unif(0,1)$ and use
\begin{equation}
\xi_{k,s}=t_{k-1}+\frac{\Delta_k}{M}(s-1+V_{k,s}),\qquad
\widehat K_{\rm dyn}=\sum_{k=1}^{K}\frac{\Delta_k}{M}\sum_{s=1}^{M}\kappa(\xi_{k,s},Z_{\xi_{k,s}^-}).
\label{eq:quadrature}
\end{equation}
Conditional on the sampled path, this stratified estimate is unbiased for the time integral. Increasing $M$ changes the objective estimator, not the sampled path.
\section{Algorithms and implementation}
\label{app:algorithms}
The algorithms below use $F_\theta$ for the entire post-event state transformation. An equivalent residual implementation writes $G_\theta(z,u)=F_\theta(z,u)-z$ and updates $z\leftarrow z+G_\theta(z,u)$. The pseudocode describes one trajectory; the implementation vectorizes over trajectories and predictive samples.

\subsection{Training}
Training uses a posterior proposal envelope, hard-forward events, and a straight-through derivative. For a proposal with acceptance probability $p$, the forward value of Equation~\eqref{eq:gate} is Bernoulli$(p)$. The derivative with respect to its logit is
\begin{equation}
\frac{\partial g_i}{\partial\logit(p_i)}
=\frac{g_i^{\rm s}(1-g_i^{\rm s})}{\tau_{\rm ST}}.
\end{equation}
This is the surrogate used for event occurrence. Explicit posterior-rate terms in the KL and sparsity penalty are differentiated directly. Reparameterized initial states and Gaussian marks carry ordinary derivatives. Proposed candidates and their times are treated as constant in the backward pass.

\begin{algorithm}[H]
\caption{SPVI training step with hard-forward thinning}
\label{alg:train}
\begin{algorithmic}[1]
\Require observations $Y$ on $[0,T]$, quadrature count $M$,
         small slack $\rho>0$
\State Encode $Y$ and compute $q_{\psi,0}$ and
       $(\lambda_k^q,\mu_k^q,v_k^q)_{k=1}^K$
\State Set the sequence-specific proposal rate
       $\bar\lambda_q \gets
       \sg\!\left(\max_{1\leq k\leq K}\lambda_k^q\right)+\rho$
\State Sample $Z_0\sim q_{\psi,0}$ by Gaussian reparameterization
\State Draw $L\sim\Poi(\bar\lambda_q T)$
\State Draw $r_{1:L}\overset{\rm iid}{\sim}\Unif(0,T)$
       and sort in increasing order
\ForAll{candidates $i$ in parallel}
  \State Find its interval $k$ and set
         $p_i=\lambda_k^q/\bar\lambda_q$
  \State Draw a mark
         $U_i=\mu_k^q+\sqrt{v_k^q}\odot\varepsilon_i$,
         $\varepsilon_i\sim\mathcal N(0,I)$
  \State Draw logistic noise and construct $g_i$
         using Equation~\eqref{eq:gate}
\EndFor
\State Compute the candidate states with
       \Call{BlockScan}{$Z_0,r,U,g$}
\State Query states at observation times and
       stratified quadrature times
\State Compute the observation log likelihood and
       the initial-state Gaussian KL
\State Estimate the dynamic KL with
       Equation~\eqref{eq:quadrature}
\State Form a sample of $\mathcal J$
       in Equation~\eqref{eq:objective}
\State Backpropagate its negative, clip the gradient,
       and update with AdamW
\end{algorithmic}
\end{algorithm}

\begin{algorithm}[H]
\caption{Layerwise state construction shared by training and reconstruction}
\label{alg:scan}
\begin{algorithmic}[1]
\Function{BlockScan}{$Z_0,r_{1:L},U_{1:L},g_{1:L}$}
\If{$L=0$}\State \Return the initial state and the event-free flow\EndIf
\State Set $r_0=0$, $H_0=Z_0$, and compute $D_i=e^{-\gamma(r_i-r_{i-1})}$
\For{$j=1,\ldots,J$}
  \State Read known lower pre-event states $D_i^{(<j)}\odot H_{i-1}^{(<j)}$
  \State Read known lower post-event states $H_i^{(<j)}$
  \State Compute every $(R_i^{(j)},B_i^{(j)})$ with $F_\theta^{(j)}$ in parallel
  \State $A_i^{(j)}\gets[\mathbf 1+g_i(R_i^{(j)}-\mathbf 1)]\odot D_i^{(j)}$, $b_i^{(j)}\gets g_i B_i^{(j)}$
  \State Set $(A_i^{(j)},b_i^{(j)})=(1,0)$ on padding positions
  \State Compute every affine prefix under $\star$ and apply it to $Z_0^{(j)}$
\EndFor
\State \Return all post-event states, with $Z_0$ prepended
\EndFunction
\end{algorithmic}
\end{algorithm}

\subsection{Posterior reconstruction and generative forecasting}
At evaluation, the recognition network is used only on the context. Each predictive path samples its own initial state and context event configuration. The implementation repeats the context encoding across the packed predictive samples. Equation~\eqref{eq:posterior} directly generates posterior events without an envelope; all of these events are accepted. Future events use the generative rate and mark law.

\begin{algorithm}[H]
\caption{Context reconstruction followed by state-dependent prior forecasting}
\label{alg:forecast}
\begin{algorithmic}[1]
\Require context $Y$ through $T$; future query times $t^f_{1:F}$; prior cap $\bar\lambda_\theta$
\State Encode the context and sample $Z_0\sim q_{\psi,0}$
\ForAll{context intervals $k$ in parallel}
  \State Sample $N_k\sim\Poi(\lambda_k^q\Delta_k)$, uniform event times, and Gaussian marks
\EndFor
\State Sort and pack the events; apply \Call{BlockScan}{$Z_0,r,U,\mathbf 1$}
\State Query $Z_T$ and initialize $z\gets Z_T$, $s\gets0$, $H\gets t_F^f-T$; store $(0,Z_T)$
\State Draw $M\sim\Poi(\bar\lambda_\theta H)$ and sort $c_{1:M}\overset{\rm iid}{\sim}\Unif(0,H)$
\For{$i=1,\ldots,M$ in chronological order}
  \State $z\gets\Phi_{c_i-s}(z)$; $s\gets c_i$
  \State Draw $A_i\sim\operatorname{Bernoulli}(\lambda_\theta(z)/\bar\lambda_\theta)$
  \If{$A_i=1$}
    \State Draw $U_i\sim p_\theta(\cdot\mid z)$ and set $z\gets F_\theta(z,U_i)$
  \EndIf
  \State Store the post-candidate state and time
\EndFor
\State Query future states from the preceding stored states using the closed-form flow
\State Decode means and draw observation samples; repeat to form a predictive ensemble
\end{algorithmic}
\end{algorithm}

\subsection{Recorded synthetic configuration}
\label{app:implementation}
Table~\ref{tab:hyper} summarizes the synthetic implementation settings. The recognition network is a causal pre-normalized transformer with zero dropout. Posterior event rates use a bounded log-rate link,
\begin{equation}
\lambda_k^q=\exp\!\left(a+(b-a)\sigmoid(\ell_\psi(h_k))\right),\qquad
 a=\log10^{-3},\quad b=\log10.
\end{equation}
Posterior mark and initial-state variances use an exponentiated log-variance clipped to $[-7,3]$. The initial-state head reads the final context embedding. Retention gates lie between $10^{-2}$ and one. The reported total trainable parameter counts are 1,152,436 for LJM, 854,195 for Latent ODE, 1,105,811 for Latent SDE, and 288,659 for the synthetic NCDSSM model.

\begin{table}[t]
\centering\small
\caption{Synthetic implementation settings. The KL weight takes values in the listed set; the event-regularization weight is swept separately.}
\label{tab:hyper}
\begin{tabular}{@{}lr@{}}
\toprule Setting & Value\\\midrule
Latent dimension / ordered block sizes & $8$ / $(4,4)$\\
Mark dimension / observation dimension & $2$ / $2$\\
Training batch size / observations per window & $512$ / $52$\\
Context / forecast observations & $44$ / $8$\\
Predictive samples / evaluation batch size & $128$ / $16$\\
Posterior rate range & $(10^{-3},10)$\\
Prior rate cap & $10$ / $15$\\
Straight-through temperature & $1$\\
Quadrature probes per interval & $4$\\
KL weight & $\{2,5\}$\\
Posterior sparsity weights & $\{0,0.03,0.1,0.3,1\}$\\
Prior-rate penalty & $0$\\
AdamW momentum coefficients & $(0.9,0.99)$\\
Weight decay / gradient-norm cap & $10^{-5}$ / $1$\\
Whole-window candidate allocation guard & $4096$\\
\bottomrule
\end{tabular}
\end{table}

The training likelihood is a per-trajectory sum, averaged over the batch. There is one Monte Carlo trajectory per training example. The learning-rate implementation combines linear warmup from $0.2$ of the peak rate with cosine decay. Candidate streams are padded to a power of two for the scan implementation.

\section{Synthetic data-generating processes}
\label{sec:datasets}

All six synthetic datasets share a two-dimensional latent state
$z\in\mathbb R^{2}$ and a two-dimensional observation $y\in\mathbb R^{2}$.
Each trajectory is observed at $52$ times, of which the first $44$ form the
context window and the last $8$ the forecast horizon. The observation model is
common to all six: writing $z(t_k)$ for the latent state at the $k$-th
observation time,
\begin{equation}
y_k \;=\; z(t_k) \;+\; b \;+\; \sigma\,\varepsilon_k,
\qquad \varepsilon_k\stackrel{\text{iid}}{\sim}\mathcal N(0,I_2),
\label{eq:obs}
\end{equation}
with an identity emission map, an additive offset $b\in\mathbb R^{2}$ that is
zero for every dataset except \textsc{Irregular PDMP}, and isotropic Gaussian
noise of scale $\sigma$. The models are trained with their decoder scale pinned
to this $\sigma$, so \eqref{eq:obs} is known exactly at training time. Every
per-trajectory quantity below is drawn independently for each trajectory from a
single seeded stream.

Two of the datasets contain no jumps and serve as negative controls; two contain
jumps at deterministic, periodic times; and two are genuine piecewise-deterministic
Markov processes with stochastic event times.

\subsection{No-jump controls}

Both controls evolve by a linear map applied once per unit time step
$\delta=1$, on the regular grid $t_k=k\delta$, $k=1,\dots,52$, and both start
from the same ring distribution
\begin{equation}
z_0 \;=\; \rho\,(\cos\varphi,\ \sin\varphi)^{\!\top},
\qquad \rho\sim\mathcal U(0.5,\,2.0),
\qquad \varphi\sim\mathcal U(0,\,2\pi).
\label{eq:ring}
\end{equation}
Write $R(\vartheta)=\left(\begin{smallmatrix}\cos\vartheta&-\sin\vartheta\\
\sin\vartheta&\cos\vartheta\end{smallmatrix}\right)$ for the planar rotation.

\paragraph{Deterministic rotation.}
A noiseless rigid rotation: the latent path is fully determined by
$(\theta,z_0)$ and all randomness in $y$ comes from \eqref{eq:obs}.
\begin{equation}
z_k \;=\; R(\theta)\,z_{k-1},
\qquad \theta\sim\mathcal U(0.25,\,0.55),
\qquad \sigma=0.10,\quad b=0.
\end{equation}

\paragraph{Damped rotation.}
A contracting rotation \emph{with process noise}, so unlike the deterministic
control its latent path is itself stochastic:
\begin{equation}
z_k \;=\; e^{\alpha\delta}\,R(\omega\delta)\,z_{k-1} \;+\; \eta_k,
\qquad \eta_k\stackrel{\text{iid}}{\sim}\mathcal N\!\big(0,\ \varsigma^{2}I_2\big),
\end{equation}
with $\alpha\sim\mathcal U(-0.004,-0.002)$, $\omega\sim\mathcal U(0.4,\,0.8)$,
process-noise scale $\varsigma=0.02$, and $\sigma=0.10$, $b=0$.

\subsection{Deterministic periodic jumps: the two cyclic datasets}

Both cyclic variants use the same class and differ only in whether the decay
rates are resampled per trajectory. Between observation times the latent decays
coordinatewise, and every $m=5$ steps a mark is added, cycling through four
directions with period four. On the grid $t_k=k\delta$, $\delta=1$, initialised
from the ring law \eqref{eq:ring}, the recursion is
\begin{equation}
z_k \;=\; F\,z_{k-1} \;+\; \mathbf 1\!\left[k \bmod m = 0\right]\,
u_{\,n(k)\bmod 4},
\qquad
F=\operatorname{diag}\!\left(e^{a_1\delta},\ e^{a_2\delta}\right),
\end{equation}
where $n(k)=\lfloor k/m\rfloor$ counts the events so far, the decay acts
\emph{before} the jump, and the four marks are
\begin{equation}
u_1=\begin{pmatrix}c_1\\0\end{pmatrix},\quad
u_2=\begin{pmatrix}0\\c_2\end{pmatrix},\quad
u_3=\begin{pmatrix}-c_1\\0\end{pmatrix},\quad
u_0=\begin{pmatrix}0\\-c_2\end{pmatrix},
\end{equation}
with amplitudes $c_1\sim\mathcal U(0.9,\,1.5)$ and $c_2\sim\mathcal U(0.6,\,1.0)$
drawn once per trajectory. With $52$ steps and $m=5$ this gives exactly $10$
events per trajectory, at $k\in\{5,10,\dots,50\}$, at times known in advance.
Two consequences are worth stating. The amplitudes $c_1,c_2$ are drawn once per
trajectory rather than once per event, so conditional on a trajectory the whole
mark sequence is deterministic and the jump channel carries no aleatoric noise.
And because events fall exactly on observation times with the decay applied
first, no jump is ever hidden between two observations and the recorded latent
$z_k$ is the \emph{post}-jump state; there is accordingly no event-localisation
difficulty in these two datasets.

\paragraph{Cyclic jumps.}
The decay rates are drawn per trajectory,
$a_1\sim\mathcal U(-0.12,-0.08)$ and $a_2\sim\mathcal U(-0.18,-0.12)$, so the
flow is heterogeneous across the dataset. Here $\sigma=0.10$, $b=0$.

\paragraph{Cyclic jumps, fixed decay.}
Identical in every other respect, but the rates are held at
$(a_1,a_2)=(-0.10,-0.15)$ for every trajectory. The two variants therefore form
a paired control isolating sequence-level decay heterogeneity: the random stream
is arranged so that all other sampled quantities are unchanged.

\subsection{Stochastic events: the two PDMP datasets}

Both PDMP datasets are marked Poisson jump processes with a nonlinear,
state-dependent jump map. The event \emph{rate} is homogeneous: events arrive as
a Poisson process of constant intensity $\lambda=0.3$, independent of the state.
Between events the latent flows by the exact solution of
$\dot z=a\odot z$ with $a=(-0.10,-0.15)^{\!\top}$,
\begin{equation}
z(t)\;=\;e^{a\,(t-s)}\odot z(s)\qquad\text{for }s\le t\text{ with no event in }(s,t],
\end{equation}
starting from $z(0)=0$. At an event time $\tau$ a mark is drawn and the state is
displaced by a fixed two-layer network,
\begin{equation}
U_\tau\sim\mathcal N(0,I_2),
\qquad
z(\tau) \;=\; z(\tau^-) \;+\; s_J\,W_1\tanh\!\big(W_2\,z(\tau^-)+W_3\,U_\tau+\beta\big),
\label{eq:pdmpjump}
\end{equation}
with jump scale $s_J=1$ and weights
$W_1\in\mathbb R^{2\times 8}$, $W_2,W_3\in\mathbb R^{8\times 2}$,
$\beta\in\mathbb R^{8}$, each entry drawn once as
$\mathcal N(0,0.5^{2})$ under a fixed seed and then held constant across all
trajectories and across both PDMP datasets. The jump is thus state-dependent
through $W_2z(\tau^-)$ while the intensity is not.

Events are simulated interval by interval: on $(t_{k-1},t_k]$ of length
$\Delta_k$ the count is $N_k\sim\mathrm{Poisson}(\lambda\Delta_k)$ and the event
times are the order statistics of $N_k$ uniform draws on that interval. Both
datasets use horizon $H=13$ with $52$ observations and $\sigma=0.15$.

Because the flow is pure decay from $z(0)=0$, a trajectory remains identically
zero until its first event: with $\lambda H=3.9$ expected events the latent is
exactly $(0,0)$ for the first twelve to thirteen observations on average, and a
fraction $e^{-\lambda H}\approx 2\%$ of trajectories contain no event at all and
therefore consist of $b$ plus observation noise throughout. The offset $b$ is
added to the observations only; the stored ground-truth latent is not offset, so
any latent-space metric is compared against a target displaced from the data
by $b$.

\paragraph{Regular PDMP.}
A deterministic, equally spaced grid $t_k = kH/52$, so $\Delta_k\equiv 0.25$,
with no observation offset, $b=(0,0)^{\!\top}$.

\paragraph{Irregular PDMP.}
The grid is itself random and differs for every trajectory: $52$ points are
drawn uniformly on $(0,H]$ and sorted,
\begin{equation}
t_1<\dots<t_{52},\qquad \{t_k\}=\operatorname{sort}\big(H\cdot\mathcal U(0,1)^{52}\big),
\end{equation}
resampled if not strictly increasing. The inter-observation gaps are therefore
exponential-like rather than constant, which is what makes this the irregular
grid. It additionally carries an observation offset $b=(1,1)^{\!\top}$, and it is
generated from a different seed stream, so the two PDMP datasets are
\emph{not} paired trajectory by trajectory. A comparison between them therefore
confounds the irregular grid with a constant shift and with independent sampling
noise, and is not a clean grid-only ablation.

\begin{table}[t]
\centering\small
\caption{The six synthetic datasets. All have latent and observation dimension
$2$ and $52$ observations per trajectory, of which $44$ are context}
\label{tab:datasets}
\begin{tabular}{@{}lllccl@{}}
\toprule
Dataset & Event times & Jump / innovation & $\sigma$ & Grid & Per-traj.\ randomness \\
\midrule
Deterministic rotation & none & none & $0.10$ & regular & $\theta$, $z_0$ \\
Damped rotation & none & Gaussian, $\varsigma{=}0.02$ & $0.10$ & regular & $\alpha,\omega$, $z_0$ \\
Cyclic jumps & every $5$ steps & $4$-cycle, fixed dirs. & $0.10$ & regular & $a_1,a_2$, $c_1,c_2$, $z_0$ \\
Cyclic jumps, fixed decay & every $5$ steps & $4$-cycle, fixed dirs. & $0.10$ & regular & $c_1,c_2$, $z_0$ \\
Regular PDMP & Poisson, $\lambda{=}0.3$ & $W_1\tanh(\cdot)$, Eq.~\eqref{eq:pdmpjump} & $0.15$ & regular & events, marks \\
Irregular PDMP & Poisson, $\lambda{=}0.3$ & $W_1\tanh(\cdot)$, Eq.~\eqref{eq:pdmpjump} & $0.15$ & \emph{random} & events, marks, grid \\
\bottomrule
\end{tabular}
\end{table}
\section{Evaluation definitions and protocols}
\label{app:metrics}
Let $\mathcal I$ contain the held-out scalar coordinates that are actually observed. For predictive latent paths $Z^{(1:S)}$, define the point prediction
\begin{equation}
\widehat y_{t,c}=\frac1S\sum_{s=1}^S\mu_\phi(Z_t^{(s)})_c.
\end{equation}
MSE is $|\mathcal I|^{-1}\sum_{(t,c)\in\mathcal I}(y_{t,c}-\widehat y_{t,c})^2$, and RMSE is its square root. For observation samples $\widetilde y^{(s)}$, the interval endpoints are the empirical $0.05$ and $0.95$ quantiles at each coordinate. Coverage and mean width are
\begin{equation}
\widehat C_{.90}=\frac1{|\mathcal I|}\sum_{(t,c)\in\mathcal I}\ind\{q_{.05,t,c}\leq y_{t,c}\leq q_{.95,t,c}\},
\qquad
\widehat W_{.90}=\frac1{|\mathcal I|}\sum_{(t,c)\in\mathcal I}(q_{.95,t,c}-q_{.05,t,c}).
\end{equation}

\paragraph{Joint and marginal predictive density.}
For a single trajectory with $D_f$ observed forecast scalars, the joint-path estimate is
\begin{equation}
\widehat{\mathrm{LPD}}_{\rm joint}
=\frac1{D_f}\left[\operatorname{logsumexp}_{s=1}^S
\left\{\sum_{(t,c)\in\mathcal I}\log p_\phi(y_{t,c}\mid Z_t^{(s)})\right\}-\log S\right].
\label{eq:jointlpd}
\end{equation}
The marginal estimate instead averages separate scalar mixture log densities,
\begin{equation}
\widehat{\mathrm{LPD}}_{\rm marginal}
=\frac1{D_f}\sum_{(t,c)\in\mathcal I}
\left[\operatorname{logsumexp}_{s=1}^S\log p_\phi(y_{t,c}\mid Z_t^{(s)})-\log S\right].
\end{equation}

Synthetic sequences have the same forecast dimensions, so the reported per-scalar joint score can be averaged across trajectories. These estimates assess different features of a distribution: the joint version preserves a shared latent path across future times, whereas the marginal version evaluates each scalar separately. Both approximate mixture densities with a finite sample ensemble.

For observation samples $\widetilde y^{(1:S)}$, the continuous ranked
probability score at one scalar coordinate is estimated from the sample
predictive distribution by the fair (unbiased) estimator
\begin{equation}
\widehat{\mathrm{CRPS}}(\widetilde y^{(1:S)}, y)
=\frac1S\sum_{s=1}^S\bigl|\widetilde y^{(s)}-y\bigr|
-\frac1{2S(S-1)}\sum_{s=1}^S\sum_{s'=1}^S\bigl|\widetilde y^{(s)}-\widetilde y^{(s')}\bigr|,
\end{equation}
which is computed in $O(S\log S)$ from the order statistics
$\widetilde y^{(1)}\le\dots\le\widetilde y^{(S)}$ as
$\sum_s(2s-S-1)\,\widetilde y^{(s)}/\bigl(S(S-1)\bigr)$ for the second term. The reported CRPS is the
mean over the held-out coordinates,
\begin{equation}
\mathrm{CRPS}=\frac1{|\mathcal I|}\sum_{(t,c)\in\mathcal I}
\widehat{\mathrm{CRPS}}\bigl(\widetilde y^{(1:S)}_{t,c},\,y_{t,c}\bigr).
\end{equation}

This is a marginal score: it evaluates each time and coordinate separately and
is blind to dependence across the horizon, so a model that gets every marginal
right while sampling incoherent paths is not penalised. Horizon CRPS scores the
aggregate instead. Summing over the observed times of each held-out window
before scoring,
\begin{equation}
Y_c=\sum_{t:(t,c)\in\mathcal I}y_{t,c},
\qquad
\widetilde Y^{(s)}_c=\sum_{t:(t,c)\in\mathcal I}\widetilde y^{(s)}_{t,c},
\qquad
\mathrm{CRPS}_{\Sigma}=\frac1{|\mathcal C|}\sum_{c\in\mathcal C}
\widehat{\mathrm{CRPS}}\bigl(\widetilde Y^{(1:S)}_c,\,Y_c\bigr),
\end{equation}
where $\mathcal C$ indexes the window--coordinate pairs with at least one
observation. Because the sum is taken within a path before the score is
applied, $\mathrm{CRPS}_{\Sigma}$ depends on the joint law of the predicted
trajectory and not only on its marginals. On REFIT the observable is power, so
this quantity is reported in energy units as
$E=\sum_t P_t/(60\times1000)$~kWh and named energy CRPS.

\begin{figure}
    \centering
    \includegraphics[width=\linewidth]{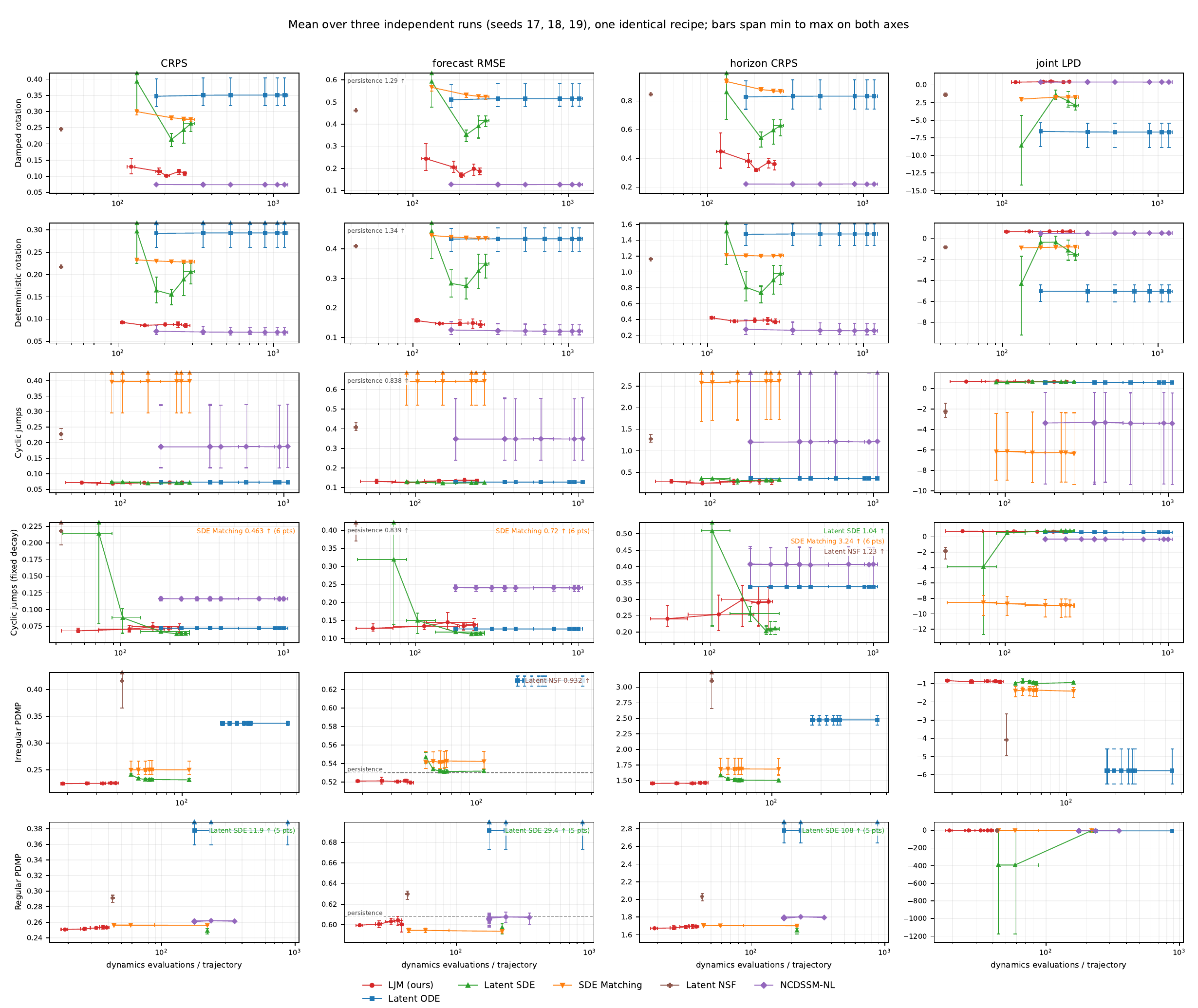}
    \caption{Forecast quality against latent update cost on the six synthetic systems. Rows are systems, columns are CRPS, forecast RMSE, horizon CRPS, joint LPD of the central 90\% predictive interval, and joint log predictive density per scalar. The horizontal axis is the number of dynamics-network evaluations per trajectory over the context window, on a logarithmic scale. LJM is shown at each of $\beta_{\text{sparse}} \in \{0, 0.03, 0.1, 0.3, 1\}$; points of one family are joined in order of cost.}
    \label{fig:synthetic-pareto}
\end{figure}

\subsection{Real-data architectures and training protocols}
\label{app:real}
On REFIT every family is built around the same components, so the comparison
varies the latent dynamics and holds everything else fixed. All three share one
history encoder and one decoder, and their totals land within $1.5\%$ of each
other at roughly $1.9$M parameters --- what differs is only how each represents
the latent path between observations. LJM spends its budget on the jump map,
the prior mark and intensity networks and the causal recognition heads; SDE
Matching on a marginal posterior, a prior drift and a diffusion network; the
Neural Stochastic Flow on a learned transition law, a GRU posterior and a
bridge. Aggregate household power is positive and heavy-tailed, so every family
emits through a log-normal decoder on the household-scaled series
$x = (P + 1\,\mathrm{W})/s_h$, with $s_h$ the $0.99$ quantile over that
household's training minutes. LJM uses a latent dimension of $32$ in four
blocks of eight with eight-dimensional Gaussian marks. The USHCN models are smaller, as that dataset is: LJM there has $17{,}596$ parameters at latent
dimension $8$ in two blocks of four with two-dimensional marks.

Training is matched within each dataset and differs between them only because
the datasets do. On REFIT every family takes $30{,}000$ AdamW steps at batch
$32$ with weight decay $10^{-5}$, gradient clipping at $1.0$ and a
warmup-then-cosine schedule; the peak learning rate is $6\times10^{-4}$ for LJM
and $2\times10^{-4}$ for both baselines, and the KL weight is $1.0$ for LJM and
SDE Matching and $0.1$ for the flow, warmed in over the first $2{,}000$ steps.
The flow additionally trains on random contiguous sub-sequences of length $50$.
On USHCN every family runs $450$ epochs at batch $32$ with the same optimiser,
weight decay and clipping, a peak learning rate of $4\times10^{-4}$ decayed to
$2.5\times10^{-5}$, a warmup of eight epochs and a KL weight of $5.0$; the
decoder mean is learned with the observation scale held fixed.

\section{How sparsity changes the role of future events} \label{app:future_events}

\subsection{Task structure and intervention}
The cyclic-jump generator uses a fixed event schedule shared across
trajectories: a jump occurs every $m=5$ grid steps and the
mark sequence repeats $(+c_1,+c_2,-c_1,-c_2)$. Trajectory-level randomness
in the base task comes from the two decay rates, magnitudes $c_1,c_2$,
initial state, and observation noise. The fixed-decay variant retains the
scheduled event times and repeating mark-sign sequence. In contrast,
event timing is random in the PDMP tasks. The two rotation tasks evaluate
smooth oscillatory trajectories.

For each fitted synthetic configuration, we compare the learned prior
rollout (Full) with a rollout whose prior intensity is held at its
positive floor (Floor). Posterior context inference and the fitted model
parameters remain unchanged. The five tested values are
$\beta_{\mathrm{sparse}}\in\{0,0.03,0.1,0.3,1\}$. Each condition uses
1,000 validation trajectories and 128 predictive paths. Floor is a
near-zero-event intervention, not a zero-rate process; over the cyclic
eight-unit forecast window, its expected count is $0.008$ events per
path. Table~\ref{tab:future_event_sweep} reports the complete sweep.

\begin{table}[p]
\centering
\small
\setlength{\tabcolsep}{8pt}
\renewcommand{\arraystretch}{1.04}
\caption{Future-event suppression across all six synthetic systems and
five event-count penalties. Full uses the learned prior intensity;
Floor holds it at its positive minimum during forecasting. Context
inference and model parameters are unchanged. Each condition uses 1,000
validation trajectories and 128 predictive paths. The last column is the
mean accepted future event count under Full. Coverage is rounded to two decimal places.}
\label{tab:future_event_sweep}
\begin{tabular}{@{}r rr rr r@{}}
\toprule
& \multicolumn{2}{c}{RMSE} & \multicolumn{2}{c}{90\% coverage}
& \shortstack{Future events/path} \\
\cmidrule(lr){2-3}\cmidrule(lr){4-5}
$\beta_{\mathrm{sparse}}$ & Full & Floor & Full & Floor & Full \\
\midrule
\multicolumn{6}{@{}l}{\textit{Cyclic jumps}} \\
$0$ & $0.124$ & $1.312$ & $0.88$ & $0.37$ & $50.6$ \\
$0.03$ & $0.126$ & $1.225$ & $0.88$ & $0.38$ & $35.2$ \\
$0.1$ & $0.126$ & $0.808$ & $0.88$ & $0.39$ & $22.7$ \\
$0.3$ & $0.127$ & $0.473$ & $0.88$ & $0.56$ & $14.7$ \\
$1$ & $0.121$ & $0.120$ & $0.86$ & $0.86$ & $0.4$ \\
\addlinespace[3pt]
\multicolumn{6}{@{}l}{\textit{Cyclic jumps, fixed decay}} \\
$0$ & $0.121$ & $1.499$ & $0.88$ & $0.34$ & $52.0$ \\
$0.03$ & $0.123$ & $1.328$ & $0.88$ & $0.37$ & $34.3$ \\
$0.1$ & $0.125$ & $1.176$ & $0.89$ & $0.37$ & $21.4$ \\
$0.3$ & $0.130$ & $0.576$ & $0.89$ & $0.31$ & $15.2$ \\
$1$ & $0.116$ & $0.116$ & $0.87$ & $0.87$ & $0.4$ \\
\addlinespace[3pt]
\multicolumn{6}{@{}l}{\textit{Damped rotation}} \\
$0$ & $0.175$ & $0.910$ & $0.96$ & $0.38$ & $47.3$ \\
$0.03$ & $0.213$ & $0.841$ & $0.96$ & $0.41$ & $40.8$ \\
$0.1$ & $0.314$ & $0.879$ & $0.93$ & $0.39$ & $36.4$ \\
$0.3$ & $0.350$ & $0.880$ & $0.94$ & $0.37$ & $27.5$ \\
$1$ & $0.274$ & $0.778$ & $0.95$ & $0.47$ & $17.3$ \\
\addlinespace[3pt]
\multicolumn{6}{@{}l}{\textit{Deterministic rotation}} \\
$0$ & $0.147$ & $0.712$ & $0.97$ & $0.46$ & $44.3$ \\
$0.03$ & $0.147$ & $0.779$ & $0.97$ & $0.44$ & $43.6$ \\
$0.1$ & $0.190$ & $0.822$ & $0.96$ & $0.40$ & $35.4$ \\
$0.3$ & $0.163$ & $0.784$ & $0.97$ & $0.50$ & $26.1$ \\
$1$ & $0.184$ & $0.613$ & $0.97$ & $0.54$ & $15.4$ \\
\addlinespace[3pt]
\multicolumn{6}{@{}l}{\textit{Irregular PDMP}} \\
$0$ & $0.519$ & $0.579$ & $0.85$ & $0.70$ & $8.7$ \\
$0.03$ & $0.521$ & $0.581$ & $0.85$ & $0.70$ & $8.1$ \\
$0.1$ & $0.518$ & $0.573$ & $0.85$ & $0.71$ & $7.4$ \\
$0.3$ & $0.535$ & $0.586$ & $0.84$ & $0.70$ & $5.5$ \\
$1$ & $0.520$ & $0.572$ & $0.86$ & $0.72$ & $4.2$ \\
\addlinespace[3pt]
\multicolumn{6}{@{}l}{\textit{Regular PDMP}} \\
$0$ & $0.605$ & $0.663$ & $0.83$ & $0.69$ & $9.1$ \\
$0.03$ & $0.607$ & $0.661$ & $0.83$ & $0.69$ & $8.5$ \\
$0.1$ & $0.605$ & $0.668$ & $0.83$ & $0.69$ & $7.3$ \\
$0.3$ & $0.599$ & $0.661$ & $0.84$ & $0.70$ & $6.0$ \\
$1$ & $0.605$ & $0.664$ & $0.85$ & $0.70$ & $4.3$ \\
\bottomrule
\end{tabular}
\end{table}

\subsection{Three observed regimes}
\paragraph{Scheduled cyclic jumps.}
With the learned event mechanism, forecast RMSE varies little across the
penalty sweep: $0.121$--$0.127$ for cyclic jumps and $0.116$--$0.130$
with fixed decay. Dependence on future events nevertheless changes
substantially. Without the event-count penalty, the two fits generate
$50.6$ and $52.0$ future events per path, and suppression raises RMSE
to $1.312$ and $1.499$. At the intermediate penalties, suppressing
events remains damaging. Only the largest tested penalty produces
forecasts that are nearly unchanged by suppression: RMSE is
$0.121/0.120$ for Full/Floor on cyclic jumps and $0.116/0.116$ with
fixed decay, at approximately $0.4$ future events per path under Full.

The path diagnostics agree with this distinction. For the sparse cyclic
fit, the frequency of a largest one-step output increment above $1.0$
is $0.84$ under Full and $0.88$ under Floor, compared with $0.88$ in
the reference trajectories. For the fit with no event-count penalty,
the frequency drops from $0.79$ to $0.37$ under suppression, and
per-step coverage reaches zero from the reported step~6 onward. These
results are consistent with the sparse fit representing the predictable
cycle through the inferred context state, closed-form flow, and decoder,
whereas the other fitted configurations rely on future events. The
intervention tests that reliance without retraining.

\paragraph{Randomly timed PDMPs.}
Increasing the penalty from zero to one reduces the mean future count
from $8.7$ to $4.2$ on the irregular PDMP and from $9.1$ to $4.3$ on
the regular PDMP. Full-rollout RMSE remains similar: $0.519$ versus
$0.520$ and $0.605$ versus $0.605$, respectively. Events continue to
matter throughout the sweep. Their suppression reduces coverage by
approximately $13$--$15$ percentage points at every tested penalty,
while RMSE increases to $0.572$--$0.586$ on irregular PDMP and
$0.661$--$0.668$ on regular PDMP.

The distributional diagnostics show why point error alone understates
this dependence. Across the reported PDMP diagnostics, the frequency of
a largest one-step output increment exceeding $1.0$ is approximately
$0.23$--$0.26$ in the reference trajectories, $0.12$--$0.18$ under
Full, and at most $0.02$ under Floor. In the irregular-PDMP fit at
$\beta_{\mathrm{sparse}}=0$, suppression reduces mean interval width
from $1.139$ to $0.573$ and joint-path LPD from $-0.90$ to $-4.70$.
Interval width grows with horizon from approximately $0.76$ to $1.44$
under Full, but changes from $0.60$ to $0.55$ under Floor. Future events
supply much of the forecast tail and its horizon-dependent spread. The
reported regular-PDMP tail frequency increases from $0.128$ to $0.176$
from the zero to the largest penalty, so fewer events need not imply a
weaker tail. Even the full models still underproduce the largest changes
relative to the reference trajectories.

\paragraph{Smooth rotations.}
Future-event suppression is damaging at every tested penalty, and the
penalty also affects the full-rollout accuracy. On damped rotation,
RMSE increases from $0.175$ at $47.3$ future events to $0.274$ at
$17.3$ events for the largest penalty, reaching $0.350$ at
$\beta_{\mathrm{sparse}}=0.3$. The deterministic-rotation increase is
smaller, from $0.147$ to $0.184$ between the endpoint penalties.
Under suppression, their RMSEs are $0.778$ and $0.613$, respectively,
at the largest penalty. In these fitted models, events therefore support
the learned oscillatory forecasts even though the target trajectories are
smooth. This contrasts with the sparse cyclic fits, which remain accurate
with nearly all future events suppressed.

\subsection{Additional REFIT results}
We vary the sparsity penalty of LJM on context length $512$ and show the results in Table \ref{tab:refit_beta}.

 \begin{table}[t]
\centering\small
\setlength{\tabcolsep}{4pt}
\caption{Event-count penalty sweep for the latent jump model on REFIT, 18 households, context 512, test split.}
\label{tab:refit_beta}
\begin{tabular}{@{}lrrrrrr@{}}
\toprule
$\beta_{\mathrm{sparse}}$ & $N_{\mathrm{ev}}\downarrow$ & CRPS$/s_h\downarrow$ & RMSE (W)$\downarrow$ & Cov.$\to$0.90 & Width (W) & Energy CRPS$\downarrow$\\
\midrule
0 & 205.4 & \textbf{0.0491} & 550.3 & 0.923 & 830 & \textbf{0.0627}\\
0.01 & 200.6 & 0.0496 & \textbf{549.8} & 0.914 & 858 & 0.0633\\
0.03 & 176.8 & 0.0508 & 563.8 & 0.929 & 864 & 0.0664\\
0.1 & 185.0 & 0.0507 & 559.8 & 0.928 & 876 & 0.0649\\
0.3 & 147.5 & 0.0517 & 563.5 & 0.919 & 855 & 0.0668\\
1 & 98.4 & 0.0516 & 565.4 & 0.926 & 920 & 0.0666\\
2 & 64.2 & 0.0526 & 568.6 & \textbf{0.909} & \textbf{735} & 0.0690\\
4 & 46.6 & 0.0540 & 573.9 & 0.922 & 920 & 0.0722\\
\bottomrule
\end{tabular}
\end{table}
\paragraph{Interpretation.}
Table~\ref{tab:refit_beta} varies the event penalty at context 512, and the
comparison that matters is against Table~\ref{tab:refit_contexts}. Raising
$\beta_{\mathrm{sparse}}$ from $0$ to $4$ cuts the mean accepted event count
from $205.4$ to $46.6$, a factor of $4.4$, and costs $10\%$ in normalised CRPS
($0.0491\to0.0540$). That degradation is small relative to the gap to the
baselines. At $\beta_{\mathrm{sparse}}=2$ the model uses $64.2$ events and
scores $0.0526$, still ahead of SDE Matching's $0.0544\pm0.0011$ by more than
its seed spread and well ahead of the Neural Stochastic Flow's
$0.0588\pm0.0018$; only at $\beta_{\mathrm{sparse}}=4$ does it fall level with
SDE Matching, and it beats the flow at every penalty in the sweep. The event
budget is therefore not what buys the margin: three quarters of the events can
be removed before the advantage over the stronger baseline disappears, and the
advantage over the weaker one survives the whole sweep.

Coverage stays
between $0.909$ and $0.929$ throughout and is closest to nominal at
$\beta_{\mathrm{sparse}}=2$, where the interval width is also narrowest at
$735$~W, so the sparser models are not achieving their coverage by widening.

\section{Scan versus chronological state construction}
\label{app:scan_benchmark}

\subsection{Benchmark setup}
We evaluate the same fitted models with identical inputs using two
execution strategies: the chronological flow--gated-jump loop from the
test suite and the vendored pure-PyTorch associative scan used during
training. The scan implementation uses no fused kernel. Timings are
medians measured in float32 on one NVIDIA A40. We vary the batch size $B$
and processed proposal length $L$; the latter includes padding at the
training lengths and is distinct from the accepted event count.
Table~\ref{tab:scan_benchmark} combines forward-only time,
forward--backward time, and peak memory for all eight settings. This
experiment measures posterior state construction and its backward pass,
separately from the encoder, full variational objective, and optimizer.

\begin{table}[t]
\centering
\small
\setlength{\tabcolsep}{4pt}
\renewcommand{\arraystretch}{1.14}
\caption{Matched-model posterior state construction on one NVIDIA A40
(float32; median times). Seq. is the chronological flow--gated-jump loop;
Scan is the pure-PyTorch scan used in training, without a fused kernel.
$B$ is batch size and $L$ is the processed proposal length.
$\dagger$ marks training padding lengths. Speed-up refers to the
forward--backward pass and are computed from unrounded timing measurements; timing entries are
rounded. Peak-memory units are shown in each entry.}
\label{tab:scan_benchmark}
\label{tab:scan_backward}
\label{tab:scan_memory}
\begin{tabular}{@{}lrr rr rrr rr@{}}
\toprule
& & & \multicolumn{2}{c}{Forward (ms)}
& \multicolumn{2}{c}{Forward + backward (ms)} &
& \multicolumn{2}{c}{Peak memory} \\
\cmidrule(lr){4-5}\cmidrule(lr){6-7}\cmidrule(lr){9-10}
Model & $B$ & $L$ & Seq. & Scan & Seq. & Scan & Speed-up & Seq. & Scan \\
\midrule

Cyclic & 16  & $1024^{\dagger}$ & 985   & 5.0  & 3,025  & 24 & $125\times$ & 155 MB  & 174 MB \\
Cyclic & 128 & 1024             & 1,019 & 10.9 & 3,053  & 32 & $96\times$  & 1.20 GB & 1.30 GB \\
Cyclic & 128 & 4096             & 4,120 & 41.8 & 12,506 & 93 & $134\times$ & 4.81 GB & 5.20 GB \\
\bottomrule
\end{tabular}
\end{table}

\subsection{Runtime scaling}

The larger-batch cyclic results expose the scan's increasing arithmetic
work: at $B=128$, increasing $L$ from $1024$ to $4096$ increases
forward--backward time from $32$ to $93$ ms, compared with $3{,}053$ to
$12{,}506$ ms for the chronological loop. The approximately flat timings
in smaller cases describe the measured hardware regime; the state
construction still has $O(JL)$ work and $O(J\log(1+L))$ temporal depth
for fixed block sizes and coefficient-network architectures.



\end{document}